\newcommand{\Qset}{\mathcal{Q}}
\newtheorem{example}{Example} 
\newtheorem{theorem}{Theorem}  
\newtheorem{proposition}[theorem]{Proposition}
\newtheorem{definition}[theorem]{Definition}
\newcommand{\bx}{{\bf x}}
\title{Learning Choice Functions with Gaussian Processes }
\author[1]{Alessio Benavoli}
\author[2]{Dario Azzimonti}
\author[2]{Dario Piga}
\affil[1]{%
    School of Computer Science and Statistics\\
    Trinity College Dublin, Ireland
}
\affil[2]{%
    Dalle Molle Institute for Artificial Intelligence (IDSIA)\\
    USI/SUPSI\\
    Lugano, Switzerland
\vspace{0cm}}
\begin{document}
\maketitle

\begin{abstract}
In consumer theory, ranking available objects  by means of preference relations yields the most common description of individual choices. However, preference-based models assume that individuals: (1) 
give their preferences only between pairs of objects; (2) are always able to pick  the best preferred object. In many situations, they may be instead choosing out of a set with more than two elements and, 
because of lack of information and/or incomparability (objects with  contradictory characteristics), they may not able to select a single most preferred object.  To address these situations, we need a choice-model which allows an individual to express a set-valued choice. Choice functions provide such a mathematical framework. We propose a Gaussian Process model to learn choice functions from choice-data. The proposed model assumes a multiple utility representation of a choice function based on the concept of Pareto rationalization, and derives a strategy to learn both the number and the values of these latent multiple utilities. Simulation experiments demonstrate that the proposed model outperforms the state-of-the-art methods. 
\end{abstract}

\section{Introduction}
\label{sec:intro}
We are interested in learning the behavior of an individual (e.g., a consumer), we call her Alice, who is faced with the problem of choosing from among a set of objects, e.g., laptops:

\centerline{
\mbox{\huge $\big\{$} \includegraphics[width=0.9cm]{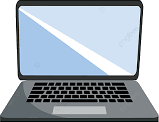},  \includegraphics[width=0.9cm]{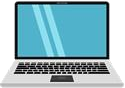},  \includegraphics[width=0.7cm]{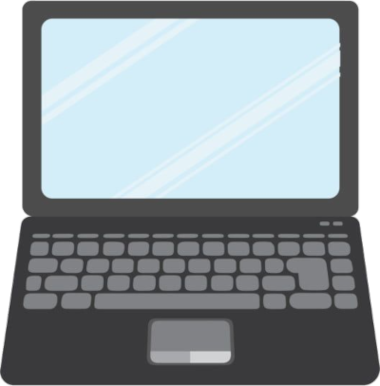},  \includegraphics[width=0.9cm]{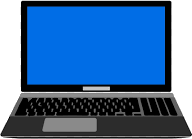} ,  \includegraphics[width=0.8cm]{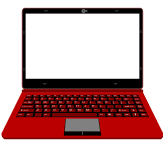}\mbox{\huge $\big\}$} }

This is an important problem for instance in computational advertising, and for personalisation of products and services. In consumer theory \citep{kreps1990course}, ranking available objects  by means of preference relations yields the most common description of individual choices:

\centerline{\includegraphics[width=0.9cm]{lap21.png} \mbox{\huge $ \stackrel{\succ}{}$} 
\includegraphics[width=0.8cm]{lap61.png}}

Preference-based models assume that  Alice is always able to pick  the best preferred object from a set of objects. However, in many contexts, Alice is faced with the problem of dealing with several contradictory primitives. For example, if the objects are laptops, Alice has to consider criteria such as: speed, drive-capacity and weight.
In other contexts, Alice may not have sufficient knowledge to pick up the best preferred object. For instance, in the laptop case, she may not know if she will use the laptop for simulations or for everyday office work.

 To deal with incomparability of objects, we need a choice-model which allows Alice to express a set-valued choice. \textit{Choice functions} provide such a mathematical framework, as well as allowing individuals to choose out of a set with more than two elements. For any
given set of objects $A$, they return the corresponding set-valued choice $C(A)$:

\centerline{
\mbox{\huge {\Large $A=$}$\big\{$} \includegraphics[width=0.9cm]{lap11.png},  \includegraphics[width=0.9cm]{lap21.png},  \includegraphics[width=0.7cm]{lap51.png},  \includegraphics[width=0.9cm]{lap41.png} ,  \includegraphics[width=0.8cm]{lap61.png}\mbox{\huge $\big\}$}}
\centerline{
\mbox{\huge {\Large$C(A)=$}$\big\{$}  \includegraphics[width=0.9cm]{lap21.png},   \includegraphics[width=0.9cm]{lap41.png} ,  \includegraphics[width=0.8cm]{lap61.png}\mbox{\huge $\big\}$}}

In the general interpretation of choice functions,  the statement that an object \includegraphics[width=0.5cm]{lap11.png} in $ A$ is rejected (that is, \includegraphics[width=0.5cm]{lap11.png}$\notin C(A)$) means that there is at least one object in $A$ that Alice strictly prefers over \includegraphics[width=0.5cm]{lap11.png}.\footnote{Alice is not required to tell us which object(s) in $C(A)$ dominate \includegraphics[width=0.5cm]{lap11.png}.} Instead,  any two objects in $C(A)$   are deemed to be incomparable by Alice. 

We represent each object by the feature vector ${\bf x}\in \mathbb{R}^c$ of its characteristics (e.g., for laptops, speed, weight etc.) and  propose a Gaussian process (GP) model to learn choice functions from choice-data $\{(C(A_s),A_s),~s=1,\dots,m\}$.

Our main contributions are:

 We propose a  generalisation of the preference learning model  by \cite{ChuGhahramani_preference2005} to choice functions. This  generalisation assumes a multiple utility representation of a choice function based on the concept of Pareto rationalization \citep{moulin1985choice}: Alice picks the Pareto optimal objects based on the value of latent multiple utilities. 

 Learning choice functions  via a Pareto embedding was originally proposed by \cite{pfannschmidt2020learning}, but using a hinge-loss and a neural network based model. We will show that our GP-based approach results into a more accurate and robust model.  
 
 The output of a choice function is a choice set, which is invariant with respect to permutations of its elements. We will show that this determines a number of challenges and propose ways to address this issue.

 Finally, we propose a method to learn the number of latent multiple utilities via Pareto Smoothed Importance sampling Leave-One-Out (PSIS-LOO) cross-validation \cite{vehtari2017practical}. Exact cross-validation requires re-fitting the model with different training sets. Instead, PSIS-LOO can be computed efficiently using  samples from the posterior.

\section{Background}
 To begin, let $\mathcal{X}$ represent some set of objects. It is quite typical in applications  to think of $\mathcal{X}$ as a subset of $\mathbb{R}^c$, where $c$ is the number of features of the object. 
 The standard way to model Alice (the consumer) is with a preference relation. We present Alice with pairs of objects, ${\bf x}_i,{\bf x}_j \in \mathcal{X}$, and ask her whether  ${\bf x}_i$ is better than ${\bf x}_j$ (see for instance     \cite{kreps1990course,furnkranz2010preferencebook,domshlak2011preferences}). If Alice says that  ${\bf x}_i$ is better than ${\bf x}_j$, we write ${\bf x}_i \succ {\bf x}_j$ and we say that ${\bf x}_i$ is strictly preferred to ${\bf x}_j$. In this paper, we will assume
that no draws are allowed -- no two distinct objects are equal -- and therefore focus on strict preference relations.
     
 An important result in \textit{preference theory}
 establishes conditions under which a preference relation can be numerically represented. We refer to a value function that represents preferences as a \textit{utility function}.

\begin{definition}
For any preference relation $\succ$  on $\mathcal{X}$, the function $u :\mathcal{X} \rightarrow \mathbb{R}$
represents $\succ$ if
\begin{equation}
 \label{eq:iifu}
 {\bf x}_i \succ {\bf x}_j ~~\textit{ iff }~~ u ({\bf x}_i ) > u ({\bf x}_j).
\end{equation}
We say that $u$ is a utility function for $\succ$.
\end{definition}
The relation $\succ$  admits a utility function representation iff\footnote{This result holds under some topological assumptions on $\mathcal{X}$, which are met when $\mathcal{X}=\mathbb{R}^c$ \citep{debreu1954representation}.} it is \cite[Ch.\ 2]{kreps1990course}:
\begin{itemize}
\item \textit{Asymmetric}:  if ${\bf x}_i \succ {\bf x}_j$ then  $\neg({\bf x}_j \succ {\bf x}_i)$;
\item \textit{Negatively transitive}: if ${\bf x}_i \succ {\bf x}_j$ then for any other element ${\bf x}_k \in \mathcal{X}$ either ${\bf x}_i \succ {\bf x}_k$ or ${\bf x}_k \succ {\bf x}_j$ or both.
\end{itemize}  
%
A strict preference relation is said to be \textit{consistent} -- Alice is  \textit{rational} -- when it satisfies the above two properties. It is immediate to verify that any consistent strict preference is also \textit{transitive} and \textit{acyclic} \cite[Ch.\ 2]{kreps1990course}.\footnote{Acyclic: if, for any finite number $n$, ${\bf x}_1 \succ {\bf x}_2$, ${\bf x}_2 \succ {\bf x}_3$, $\dots$, ${\bf x}_{n-1} \succ {\bf x}_n$ then ${\bf x}_n \neq {\bf x}_1$.}

 Typical PL models can be divided in two categories:  (1) those  assuming that the preference relation is consistent and aiming to learn the underlying latent utility function, e.g.,  \citep{ChuGhahramani_preference2005,houlsby2011bayesian,benavoli2020preferential}; (2) those solving the problem as an augmented binary classification problem or constrained classification (SVM), e.g., \citep{cohen1997learning,herbrich1998learning, aiolli2004learning,har2002constraint,fiechter2000learning}.
   
 In the first case, the goal is to learn  $u : \mathcal{X} \rightarrow \mathbb{R}$ from  $m$ preferential observations:
 $$
 \mathcal{D}_m = \{ {\bf x}^{(s)}_l \succ {\bf x}^{(s)}_r:~~ s = 1,\dots,m\},
 $$ 
 with ${\bf x}^{(s)}_l\neq {\bf x}^{(s)}_r$, ${\bf x}^{(s)}_l,{\bf x}^{(s)}_r \in \mathcal{X}$ and the subscripts $l,r$ stay for ``left hand side'' term and, respectively, ``right hand side'' term of the inequality. These PL models assume a Gaussian Process (GP) prior  on the latent utility $u$.

 These PL models also account for the fact that Alice's preferences may fail to satisfy asymmetry and/or negative transitivity for a
number of reasons. For instance, \textit{Limit of discernibility:} Alice may make mistakes when comparing two objects ${\bf x}_i,{\bf x}_j$ whose difference in utility is small (e.g., errors are inversely proportional to $|u({\bf x}_i)-u({\bf x}_j)|$). \textit{Noise:} the observed utility function differs from
the true utility function due to disturbances
(e.g., $o({\bf x}_i)=u({\bf x}_i)+\text{noise}$).
These issues were originally studied by \cite{luce1956semiorders} and, respectively, \cite{thurstone2017law} for Gaussian noise.

To account for both these issues, we can assume that the probability of correctly stating ${\bf x}_i \succ {\bf x}_j$  is a function of the difference $u({\bf x}_i)-u({\bf x}_j)$. This probability can be  modelled by  the following likelihood:
\begin{equation}
    \label{eq:likelcdf0}
p({\bf x}_i \succ {\bf x}_j|u)=\Phi\left(\frac{u({\bf x}_i)-u({\bf x}_j)}{\sigma}\right),
\end{equation}
where $\Phi(\cdot)$ is the Cumulative Distribution Function (CDF) of the  standard Normal distribution and $\sigma>0$ is a scaling parameter. When $\sigma \rightarrow 0$, the CDF converges to an indicator function and \eqref{eq:likelcdf0} reduces to  \eqref{eq:iifu}. For PL, this likelihood was originally proposed by \cite{ChuGhahramani_preference2005} and derived under a Gaussian noise model.  
 
A binary relation on $\mathcal{X} \times \mathcal{X}$ can more in general be  represented through a two-argument function      
$q : \mathcal{X} \times \mathcal{X} \rightarrow \mathbb{R}$              \citep{shafer1974nontransitive,fishburn1988nonlinear}. If ${\bf x}_i$ is in relation with ${\bf x}_j$ then $q({\bf x}_i,{\bf x}_j)>0$. Since in general $q({\bf x}_i,{\bf x}_j)\neq q({\bf x}_j,{\bf x}_i)$ we can equivalently write $q({\bf x}_i,{\bf x}_j)$ as  $q([{\bf x}_i,{\bf x}_j])$, that is as a function of the vector $[{\bf x}_i,{\bf x}_j]$. 
The function $q$ can be interpreted as a ``strength of preference'', with values of $q([{\bf x}_i,{\bf x}_j])$ close to zero indicating a difficult
decision -- Alice cannot distinguish ${\bf x}_i,{\bf x}_j$. This is a natural generalization of representation results for consistent preferences discussed previously, in which case
one can set $q([{\bf x}_i,{\bf x}_j]) = u({\bf x}_i)-u({\bf x}_j)$ for a utility function $u$.

Under this representation,  PL  can be formulated as a classification problem by rewriting
$\mathcal{D}_m$ as the dataset $(X,Y)$: 
$$
X=\begin{bmatrix}
   {\bf x}^{(1)}_l & {\bf x}^{(1)}_r\\
   {\bf x}^{(2)}_l & {\bf x}^{(2)}_r\\
   \vdots & \vdots\\
   {\bf x}^{(m)}_l & {\bf x}^{(m)}_r\\
  \end{bmatrix},~~Y=\begin{bmatrix}
  1\\
  1\\
  \vdots\\   
  1\\
  \end{bmatrix}.
$$
Indeed, most of the initial PL methods solved the PL problem as an augmented binary classification problem. The resulting classification function is not guaranteed to satisfy  asymmetry and negative transitivity in general. However, for kernel-based methods, it is possible to derive classifiers that satisfy one or both these properties. 

Indeed, a GP prior on the latent utility $u({\bf x})\sim \text{GP}(0,k({\bf x},{\bf x}'))$ induces a GP prior on $q([{\bf x}_i,{\bf x}_j]) = u({\bf x}_i)-u({\bf x}_j)$ by linearity \citep{houlsby2011bayesian}: $q([{\bf x}_i,{\bf x}_j]) \sim \text{GP}(0,k_p([{\bf x}_i,{\bf x}_j],[{\bf x}'_i,{\bf x}'_j])$, where  
\begin{align}
\label{eq:kernpref}
 k_p([{\bf x}_i,{\bf x}_j],[{\bf x}'_i,{\bf x}'_j])&=k({\bf x}_i,{\bf x}'_i)-k({\bf x}_i,{\bf x}'_j)-k({\bf x}'_i,{\bf x}_j)+k({\bf x}_j,{\bf x}'_j),
\end{align}
which is called \textit{preference kernel}.  Functions $q$ sampled from the above GP satisfy asymmetry and negative transitivity, and so do the  GP classifier based on it. In the following, we refer to the GP PL-model based on this kernel as  \textit{Preferential GP} (PGP). 

\cite{pahikkala2010learning} instead, using a feature map view, derived a kernel
\begin{align}
 \label{eq:kernprefintr}
 k_a([{\bf x}_i,{\bf x}_j],[{\bf x}'_i,{\bf x}'_j])&=k({\bf x}_i,{\bf x}'_i)k({\bf x}_j,{\bf x}'_j)-k({\bf x}_i,{\bf x}'_j)k({\bf x}'_i,{\bf x}_j),
\end{align}
satisfying asymmetry but not negative transitivity in general. This kernel is known as  \textit{intransitive preference kernel}. A PL model which employs a GP prior on $q$ with kernel \eqref{eq:kernprefintr} has been recently proposed by \cite{chau2022learning}. In the following, we refer to this model as GPGP.

\subsection{Choice functions}
The PL models discussed in the previous section assume that Alice  gives her preferences between pairs of objects. In many situations, she will be instead choosing out of a set with more than two elements. In this more general case, Alice's choices can be formalised through the concept of choice functions.  Let  $\Qset$ denote the set of all  finite subsets of $\mathcal{X}$, then \cite{kreps1990course}:
\begin{definition}
 A choice function $C$ is a set-valued operator on sets of objects. More precisely, it is a map $C: \Qset \rightarrow \Qset$ such that, for any set of objects $A \in \Qset$, the corresponding value of $C$ is a subset $C(A)$ of $A$. 
\end{definition}
It will be assumed throughout this paper that Alice is
able to find a choosable object in every set she is presented with, and therefore $C(A) \neq \emptyset$ for all $A$. It is convenient to introduce the set of rejected objects, denoted by $R(A)$, and  equal to $A\backslash C(A)$.

There are two main interpretations of choice functions. 
In both interpretations, for a given object set $A \in \Qset$, the statement that an object ${\bf x}_j \in A$ is rejected from $A$ (that is, ${\bf x}_j  \notin C(A)$) means that there is at least one object  ${\bf x}_i \in A$ that Alice strictly prefers over ${\bf x}_j$. Note that, Alice is not required to tell us which object(s) in $C(A)$ she strictly prefers to ${\bf x}_j$. This makes choice functions a very easy-to-use tool to express choices. The two interpretations differ instead in the meaning of the statement  ${\bf x}_i  \in C(A)$.
\begin{enumerate}
    \item In the traditional interpretation \cite{kreps1990course}, one reads the statement ${\bf x}_i  \in C(A)$ as ``${\bf x}_i$ is considered to be at least as good as all other objects in $C(A)$,” and thus infers from a statement like $\{{\bf x}_i, {\bf x}_j\} \subseteq C(A)$ that Alice is indifferent between ${\bf x}_i$ and ${\bf x}_j$.
    \item The alternative interpretation of  $\{{\bf x}_i, {\bf x}_j\} \subseteq C(A)$ is that ${\bf x}_i$ and ${\bf x}_j$ are incomparable for Alice. 
\end{enumerate} 
Incomparability can arise for two reasons.
First, the objects to be compared have multiple utilities for Alice. For example, if the objects are laptops, Alice may consider multiple utilities such as speed and weight.
Second, incomparability can arise due to incompleteness     \citep{seidenfeld2010coherent}, which represents simply an absence of knowledge about the
underlying utility function.
We can model both these cases assuming there are multiple utility functions (due either to incomparability and incompleteness) and then interpret the statement $\{{\bf x}_i, {\bf x}_j\} \subseteq C(A)$ as ``${\bf x}_i$ and ${\bf x}_j$ are undominated in $A$ in a Pareto sense''.\footnote{As for the case of preferences, a choice function must satisfy some consistency properties to be Pareto rationalizable  \citep{moulin1985choice,eliaz2006indifference}.}

 This approach was originally proposed in \cite{pfannschmidt2020learning} to learn choice functions.  The authors devise a differentiable loss function based on two hinge loss terms. Furthermore, they add two additional terms to the loss function: (i) an $L^2$ regularization term; (ii) a multidimensional scaling (MDS) loss to ensure that objects close to each other in the inputs space $\mathcal{X}$ will also be close in the embedding space $\mathbb{R}^{d}$. Overall the loss function is the sum of four terms weighted by four non-negative scalar parameters $\alpha_1,\alpha_2,\alpha_3,\alpha_4$ which sums up to one.
These weights are treated as hyperparameters of the learning algorithm.
This loss function is then used to learn a (deep) multi-layer perceptron to represent the embedding. We refer to this model as \textit{ChoiceNN}.  In the next section, we instead propose a GP model to learn choice functions from choice data. In Section \ref{sec:vsNN}, we will show that the GP-based model outperforms \textit{ChoiceNN}.

 Finally, it is worth to mention that PL with more than two objects was also considered by \cite{Siivola21}, the so-called batch-preference model. This model considers the case where a subject expresses  preferences for a group of objects. However, the batch-preference model in \citep{Siivola21}  assumes that two objects are always comparable and, therefore, as we will show in Section \ref{sec:otherprefmodels}, this model  assumes a single utility function. \vspace{-1cm}

\section{Methodology}
For each $A$, we interpret $C(A)$ as the \textit{undominated set} in the \textit{strong Pareto sense} with $R(A)$ being the set of dominated objects. In other words, we assume that there is a latent vector function ${\bf u}({\bf x})=[u_1({\bf x}),\dots,u_{d}({\bf x})]^\top$, for some finite dimension $d$, which embeds the objects ${\bf x}$ into a  space $\mathbb{R}^{d}$. The choice set can then be represented through a Pareto set of strongly undominated objects:
\begin{align}
\label{eq:likcondpareto1}
 &\neg  \left( \min_{i \in\{1,\dots,d\}} (u_i({\bf o})-u_i({\bf v}))< 0, ~\forall {\bf o} \in C(A)\right),\forall {\bf v} \in R(A),\\
  \label{eq:likcondpareto2}
 &\min_{i \in\{1,\dots,d\}} (u_i({\bf o})-u_i({\bf v}))< 0, ~\forall {\bf o},{\bf v}\in C(A), ~ {\bf o} \neq {\bf v}.
 \end{align}
 Condition \eqref{eq:likcondpareto1} means that, for each object ${\bf v} \in R(A)$, it is not true ($\neg$ stands for logical negation) that all objects in $C(A)$ are worse than ${\bf v}$, i.e. there is at least an object in $C(A)$ which is not worse than ${\bf v}$. Condition   \eqref{eq:likcondpareto2} means that, for each object in $C(A)$, there is no better object in $C(A)$. This requires that the latent functions values of the objects should be consistent with the choice function implied relations. 
 
To account for errors in Alice's choices, we extend the likelihood in \eqref{eq:likelcdf0} to choice functions. Consider the vectors $X=[\bx_1,\bx_2,\dots,\bx_t]^\top$ with $\bx \in \mathcal{X}$,  
$
{\bf u}({\bf x}_i)=[
u_1({\bf x}_i),u_2({\bf x}_i),\dots, u_d({\bf x}_i)]$ and ${\bf u}(X)=[{\bf u}({\bf x}_1), {\bf u}({\bf x}_2),\dots, {\bf u}({\bf x}_t)]^\top$,
and the choice dataset 
$$
\mathcal{D}_m=\{(C(A_s),A_s): \text{ for } s=1,\dots,m\},$$
where $A_s\subset X$ for each $s$. The likelihood is defined as
\begin{equation}
  \label{eq:likelihoodexpanse0}
   \begin{aligned}
  &p(\mathcal{D}_m|{\bf u}(X))=\prod_{k=1}^m p(C(A_k),A_k|{\bf u}(X))\\
   &=\prod_{k=1}^m \prod\limits_{\{{\bf o},{\bf v}\} \in C_2(A_k)}\Bigg( 1-\prod_{i=1}^d \Phi\left(\frac{u_i({\bf o})-u_i({\bf v})}{\sigma}\right)\\
   &~~~~~~~~~~~~~~~~~~~~~~~~~~~~~~~~~-\prod_{i=1}^d \Phi\left(\frac{u_i({\bf v})-u_i({\bf o})}{\sigma}\right)\Bigg)\\
         &\prod_{{\bf v} \in R(A_k)}\Bigg(1- \prod_{{\bf o} \in C(A_k)} \left(1- \prod_{i=1}^d \Phi\left(\frac{u_i({\bf o})-u_i({\bf v})}{\sigma}\right)\right)\\
   \end{aligned}    
 \end{equation}
 where the notation $\{{\bf o},{\bf v}\} \in C_2(A_k)$ means that the pair $\{{\bf o},{\bf v}\}$ is an element of $C_2(A_k)$, which is the set   of all possible 2-combination (without repetition) of the elements of the set $C(A_k)$.
   The product in the first and second row in \eqref{eq:likelihoodexpanse0} is a probabilistic relaxation of \eqref{eq:likcondpareto2}.
  The product in the last row in \eqref{eq:likelihoodexpanse0} is a probabilistic relaxation of \eqref{eq:likcondpareto1}. In Appendix \ref{app:like}, we discuss how to vectorise this complex likelihood.

\paragraph{Prior:} Similarly to GP processes for multiclass classification \cite{williams1998bayesian}, we model each  latent utility function in the vector ${\bf u}({\bf x})=[u_1({\bf x}),\dots,u_{d}({\bf x})]^\top$ as an independent GP:
\begin{equation}
\label{eq:prior}
 u_i({\bf x}) \sim \text{GP}_i(0,k_i({\bf x},{\bf x}')), ~~~~i=1,2,\dots,d.
\end{equation}
 Each GP is fully specified by its kernel function $k_i(\cdot,\cdot)$, which defines the covariance of the latent function between any two points. The model parameters are the kernel parameters (lengthscales) in $k_i(\cdot,\cdot)$, and the scale parameter $\sigma$ in the likelihood function. These parameters can be collected into a hyperparameter vector $\boldsymbol{\theta}$. 
 

\subsection{Posterior and prediction}
\label{sec:posterior}
The posterior probability of ${\bf u}(X)$ is
\begin{equation}
\label{eq:post}
p({\bf u}(X)|\mathcal{D}_m)=\frac{p({\bf u}(X))}{p(\mathcal{D}_m)} \prod_{k=1}^m p(C(A_k),A_k|{\bf u}(X)),
\end{equation}
where the prior over the component of ${\bf u}$ is defined in \eqref{eq:prior}, the likelihood is defined in \eqref{eq:likelihoodexpanse0}
and the probability of the evidence is $p(\mathcal{D}_m)= \int p(\mathcal{D}_m|{\bf u}(X))p({\bf u}(X)) d{\bf u}(X)$. The posterior $p({\bf u}(X)|\mathcal{D}_m)$ is intractable because it is not a GP. Contrarily to the case of binary preferences, the posterior is not a Skew Gaussian Process \citep{benavoli2020skew,benavoli2021}.    Inference on ${\bf u}$ could be computed using approximation methods such as (i) the Laplace Approximation (LA)  \cite{mackay1996bayesian}; (ii) Variational Inference (VI) \citep{opper2009variational,hensman2015scalable}.
  
As discussed  in Appendix \ref{app:LA}, LA cannot be applied due to the so-called `label switching' problem. Therefore, we resort to VI to learn at the same time the hyperparameters $\boldsymbol{\theta}$ of the kernel and a Gaussian approximation of the posterior  $p({\bf u}|\mathcal{D}_m)$.\footnote{We implemented our model using automatic-differentiation in Jax   \citep{jax2018github}. Details  are reported in Appendix \ref{app:VI}.}      

\paragraph{Prediction and Inferences}
Let $X^*=\{{\bf x}^*_1,\dots,{\bf x}^*_p\}$ be a set including $p$ test points and ${\bf u}(X^*)=[{\bf u}({\bf x}_1^*),\dots,{\bf u}({\bf x}_m^*)]^\top$. Under the GP prior assumption on $u$, the conditional predictive distribution $p({\bf u}(X^*)|{\bf u}(X))$ is Gaussian and, therefore, 
\begin{equation}
\label{eq:pred}
p({\bf u}(X^*)|\mathcal{D}_m)= \int p({\bf u}(X^*)|{\bf u}(X))p({\bf u}(X)|\mathcal{D}_m) d{\bf u}(X)
\end{equation}
 can be easily computed analytically using the VI posterior $p({\bf u}|\mathcal{D}_m)$, which is Gaussian.
In choice function learning, we are interested in the inference: 
\begin{equation}
\label{eq:infer}
\begin{aligned}
 P(C(A^*),A^*|\mathcal{D}_m)=\int &p(C(A^*),A^*|{\bf u}(X^*)) \\
 &p({\bf u}(X^*)|\mathcal{D}_m) d{\bf u}(X^*),
\end{aligned}
\end{equation}
which returns the posterior probability that the agent chooses the options $C(A^*)$ from the set of options $A^*$.
This probability can be easily computed via Monte Carlo sampling from the approximate posterior $p({\bf u}(X^*)|\mathcal{D}_m)$, which is Gaussian.

\begin{example}
\label{ex:1}
We illustrate the overall model with an example. We consider the bi-dimensional utility function ${\bf u}(x)=[\cos(2x),-sin(2 x)]$ with $x \in \mathbb{R}$.

\begin{tabular}{c}\vspace{-0.3cm} 
~~~~~~~~~~~~~\includegraphics[height=3cm]{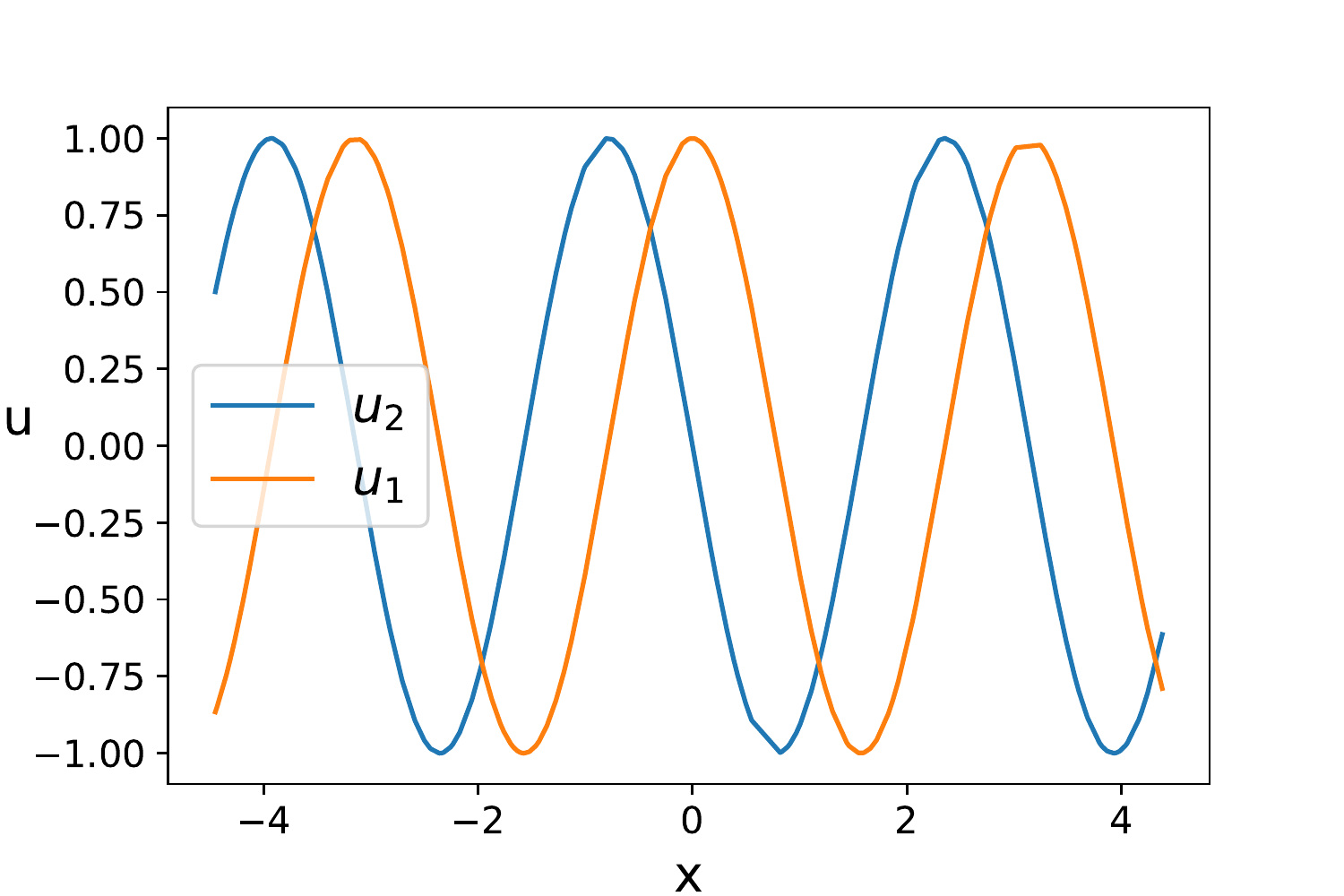}
\vspace{0cm}                                                                                                                                           \end{tabular}

We use ${\bf u}$ to define a choice function. For instance, consider the set of options $A_k=\{0,0.5,2.36\}$, given that
${\bf u}( 0)=[1,0]$, ${\bf u}(0.5)=[0.54,-0.84]$, $
{\bf u}( 2.36)=[0,1]$,
we have that $C(A_k)=\{0,2.36\}$ and $R(A_k)=A_k \backslash C(A_k)=\{0.5\}$. In fact, one can notice that  $[1,0]$ dominates  $[0.54,-0.84]$ on both the utilities, and $[1,0]$  and $[0,1]$ are incomparable.  We sample $200$ inputs $x_i$ at random in $[-4.5,4.5]$ and, using the above approach, we generate
\begin{itemize}
 \item  $m=50$ random subsets $\{A_k\}_{k=1}^m$ of the 200 points each one of size $|A_k|=3$ (respectively $|A_k|=5$) and computed the corresponding choice pairs $(C(A_k),A_k)$ based on  ${\bf u}$;
    \item  $m=150$ random subsets $\{A_k\}_{k=1}^m$ each one of size $|A_k|=3$ (respectively $|A_k|=5$) and computed the corresponding choice pairs $(C(A_k),A_k)$ based on  ${\bf u}$;
\end{itemize}
 Fixing the latent dimension $d=2$, we use these datasets to compute the posterior means and $95\%$ credible intervals of the latent functions learned using the model introduced in Section \ref{sec:posterior}.
The four posterior plots are shown in Figure \ref{fig:post}. By comparing the 1st with the 3rd plot and the 2nd with the 4th plot, it can be noticed how the posterior means become more accurate (and the credible interval smaller) at the increase of the size dataset (from m=50 to m=150 choice-sets). 
By comparing the 1st with the 2nd plot and the 3rd with the 4th plot, it is evident that estimating the latent function becomes more complex at the increase of $|A_k|$. The reason is not difficult to understand. Given $A_k$, $R(A_k)$ includes the set of rejected objects. These are objects that are dominated by (at least) one of the objects in $C(A_k)$, but we do not know which one(s). This uncertainty increases with the size of $|A_k|$ and makes the estimation problem more difficult.
\end{example}

\begin{figure*}
	\centering
	\begin{tabular}{cccc}
		\includegraphics[height=3cm,trim={0.8cm 0.0cm 1.5cm 0.0cm }, clip]{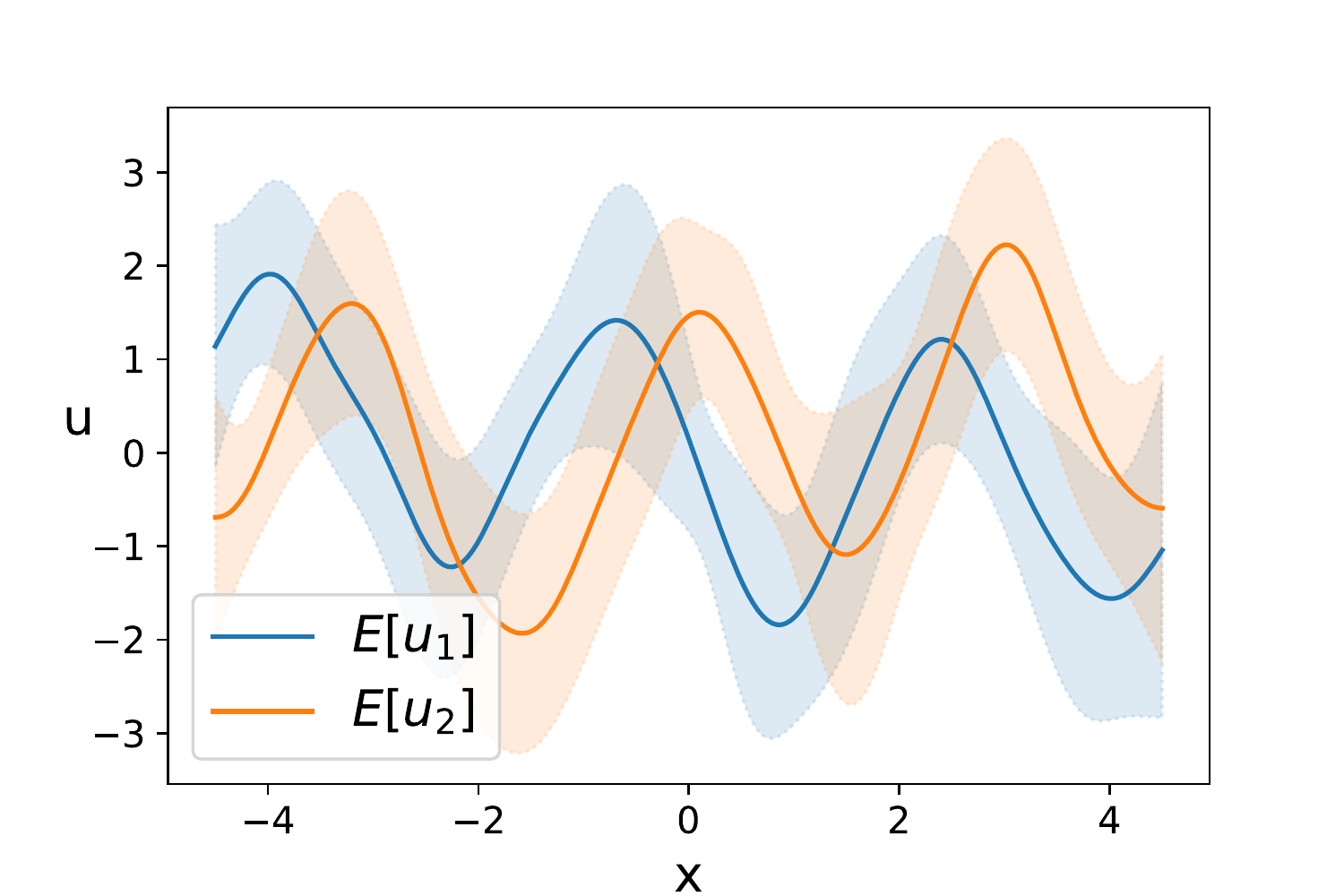} &
		\includegraphics[height=3.0cm,trim={0.8cm 0.0cm 1.5cm 0.0cm }, clip]{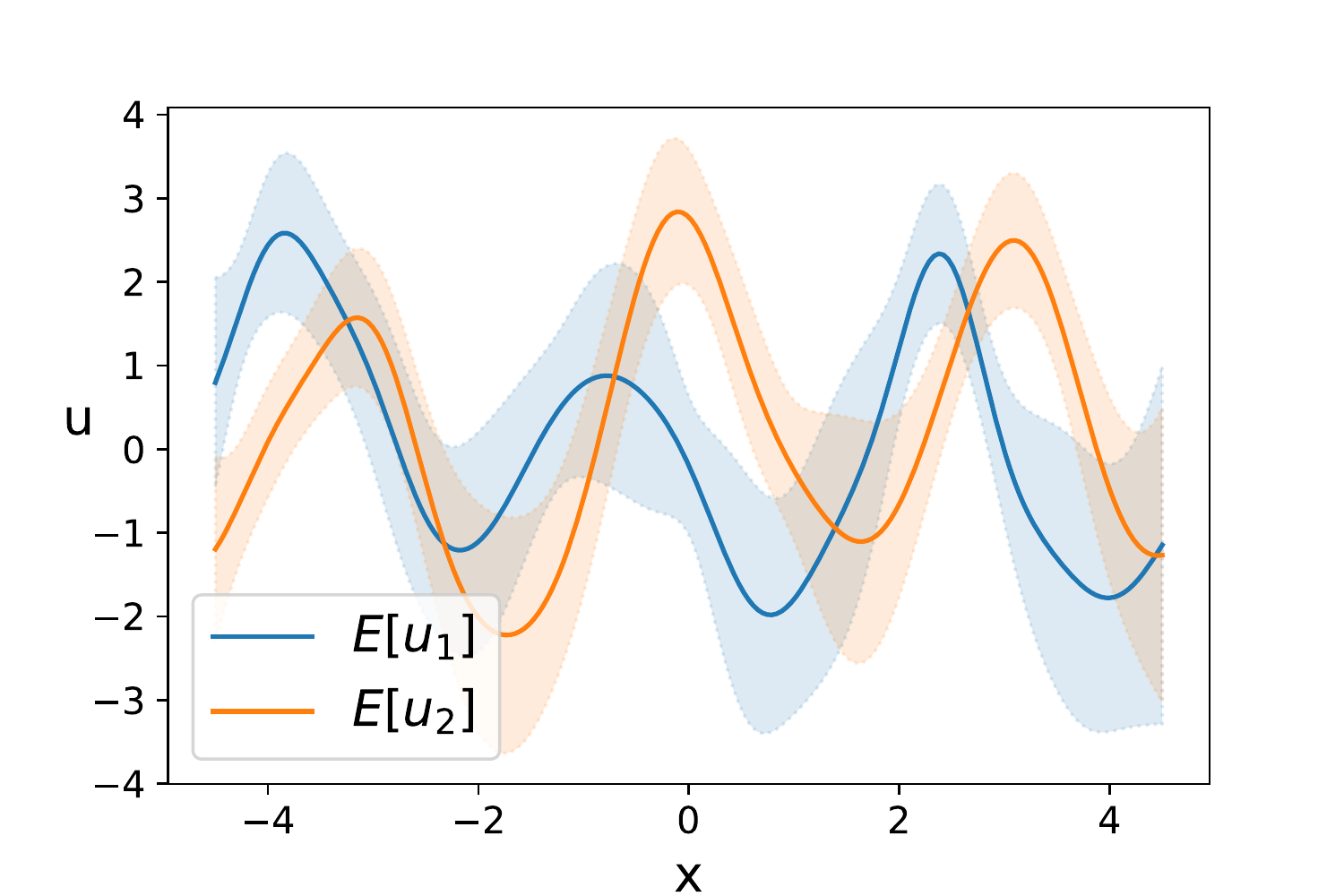} &
			\includegraphics[height=3.0cm,trim={0.8cm 0.0cm 1.5cm 0.0cm }, clip]{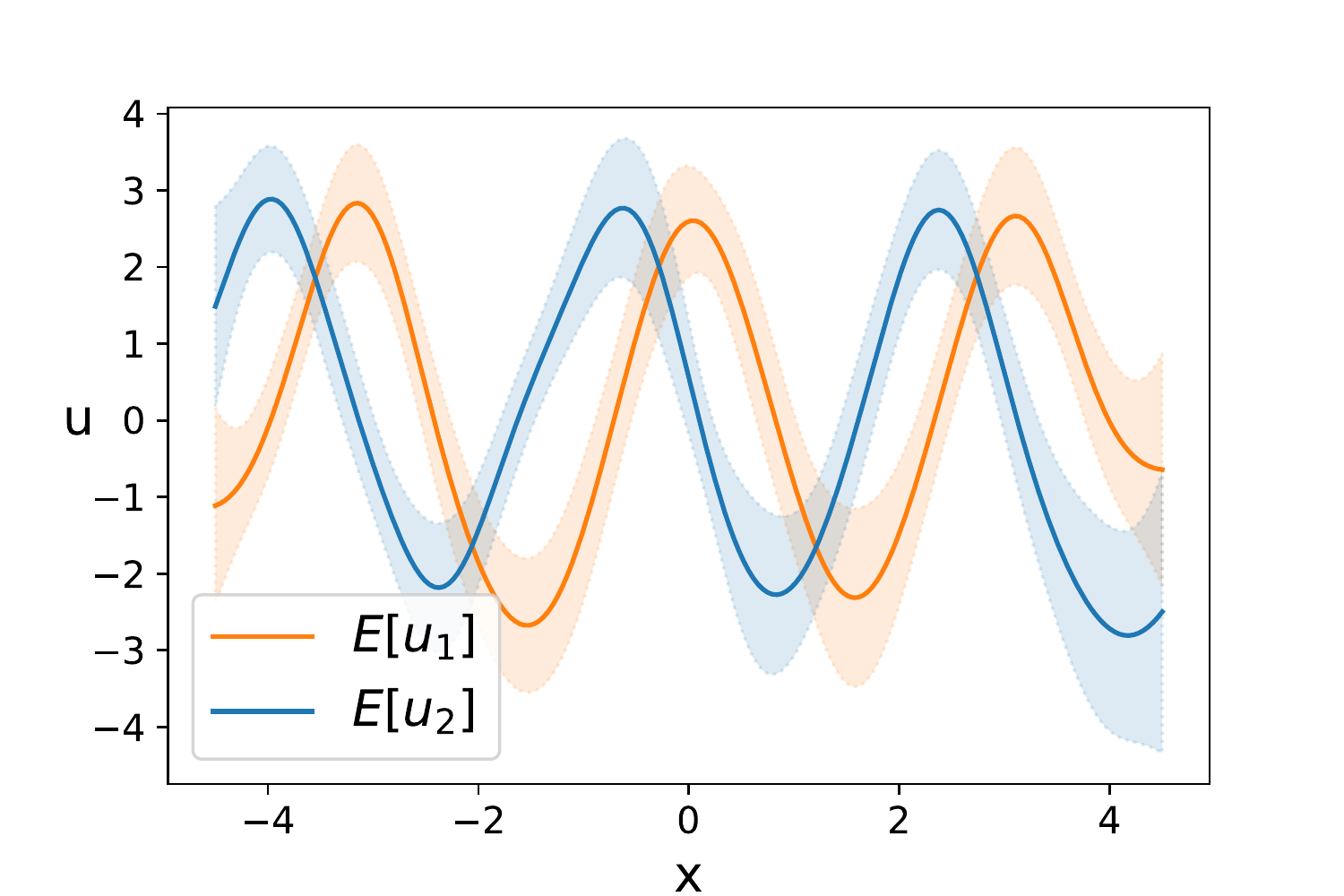} &
			\includegraphics[height=3.0cm,trim={0.8cm 0.0cm 1.5cm 0.0cm }, clip]{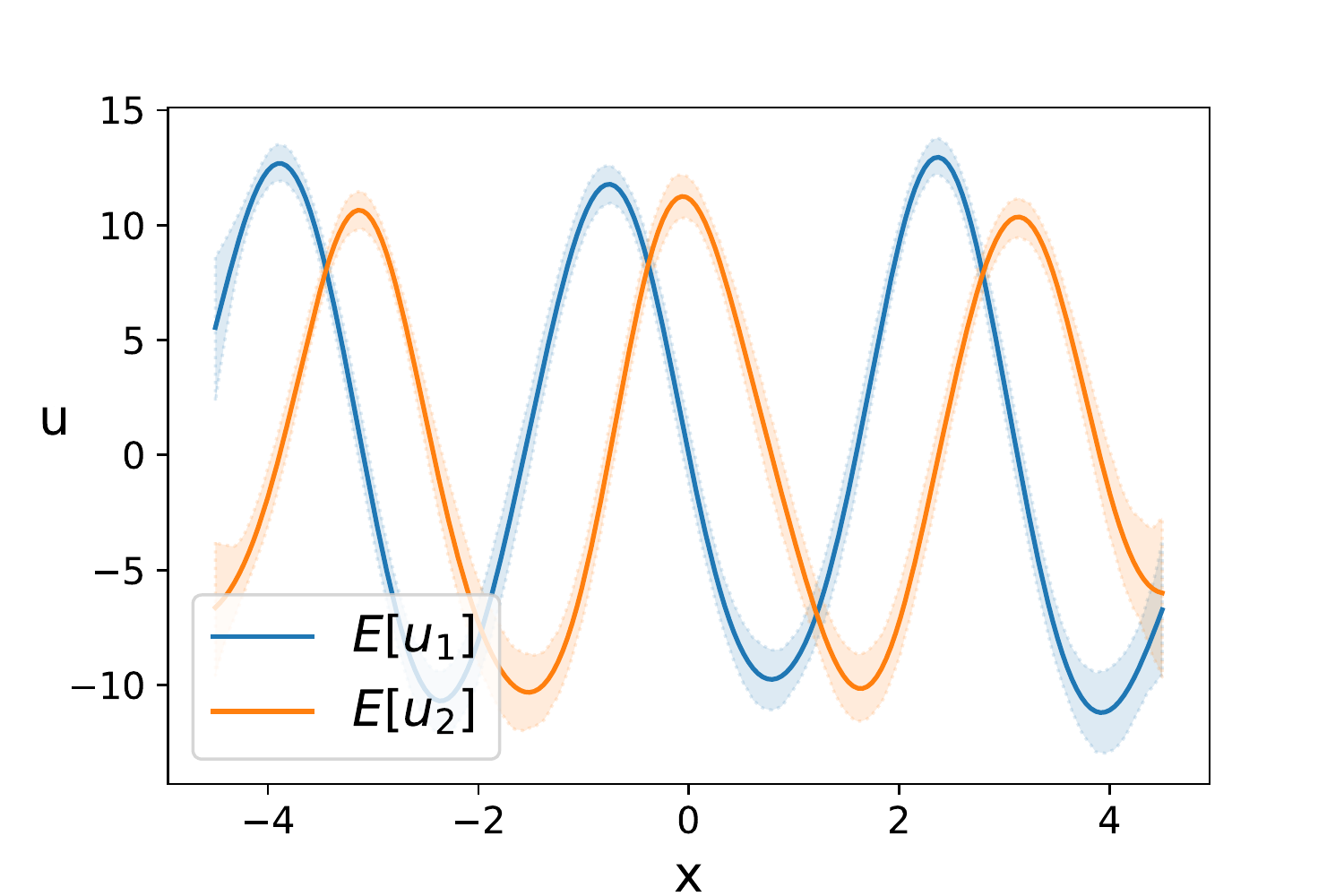} \\  
			$N=50,~|A_k|=3$ & $N=50,~|A_k|=5$ & $N=150,~|A_k|=3$ & $N=150,~|A_k|=5$\\ 
	\end{tabular}  
	\caption{Posterior mean and $95\%$ credible intervals of the two latent functions for the four artificial datasets.}
	\label{fig:post}
\end{figure*}  

\paragraph{Scalability:} 
The computational complexity for ChoiceGP is similar to that in GP multiclass classification. By exploiting the independence structure of the prior in the VI, we need
 storing and inverting $d$  kernel matrices with dimension $t \times t$. For large $t$, there are a  number of well established ways to scale up GPs that can be applied to ChoiceGP \citep{quinonero2005unifying,snelson2006sparse,pmlrv5titsias09a,Hensman2013,hernandez2016scalable,bauer2016understanding,SCHURCH2020,schuch2023correlated}.   
 
\subsection{Latent dimension selection}
\label{sec:lat}
In the previous sections, we provided a GP-based model  to learn choice functions. We refer to this model as \textit{ChoiceGP}$_d$.  \textit{ChoiceGP}$_d$ is conditional on the pre-defined latent dimension $d$ (that is, the dimension of the vector of the latent functions ${\bf u}({\bf x})=[u_1({\bf x}),\dots,u_{d}({\bf x})]^\top$). Although, it is sometimes reasonable to assume  the number of utility functions defining the choice function is known, it is crucial to to derive a statistical method to select $d$ from data.   

We propose a forward selection method. We start learning the model \textit{ChoiceGP}$_1$ and we increase the dimension $d$ in a stepwise manner (so learning \textit{ChoiceGP}$_2$,\textit{ChoiceGP}$_3$ and so on) until some model selection criterion is optimised. Criteria like AIC and BIC are inappropriate for the proposed GP-based choice  function model, since its nonparametric nature implies that the number of parameters increases also with the size of the data (as $d \times m$). We propose to use instead the \textit{Pareto Smoothed Importance sampling Leave-One-Out} cross-validation (PSIS-LOO) \cite{vehtari2017practical}. Exact cross-validation requires re-fitting the model with different training sets. Instead, PSIS-LOO can be computed efficiently using the samples from the posterior. 

We define the Bayesian LOO estimate of out-of-sample predictive fit for the model in \eqref{eq:post}:
\begin{equation}
\label{eq:bloo}
 \varphi=\sum_{k=1}^m \ln p(z_k|z_{-k}),
\end{equation}
where $z_k=(C(A_k),A_k)$, $z_{-k}=\{(C(A_i),A_i)\}_{i=1,i\neq k}^m$,
\begin{equation}
\label{eq:bloo1}
p(z_k|z_{-k})=\int p(z_k|{\bf u}(X))p({\bf u}(X)|z_{-k})d{\bf u}(X).
\end{equation}
As derived in \cite{gelfand1992model}, we can evaluate \eqref{eq:bloo1} using the samples from the full posterior, that is  ${\bf u}^{(s)}(X)\sim p({\bf u}(X)|\{z_k,z_{-k}\})=p({\bf u}(X)|\mathcal{D})$ for $s=1,\dots,S$.\footnote{We generate these samples from the variational posterior.} We first define the importance weights:
$$
w^{(s)}_k=\frac{1}{p(z_k|{\bf u}^{(s)}(X))}\propto \frac{p({\bf u}^{(s)}(X)|z_{-k})}{p({\bf u}^{(s)}(X)|\{z_k,z_{-k}\})}
$$
and then approximate \eqref{eq:bloo1} as:
\begin{equation}
\label{eq:bloo2}
p(z_k|z_{-k})\approx \frac{\sum_{s=1}^S w^{(s)}_k p(z_k|{\bf u}^{(s)}(X))}{\sum_{s=1}^S w^{(s)}_k }.
\end{equation}
It can be noticed that \eqref{eq:bloo2} is a function of $p(z_k|{\bf u}^{(s)}(X))$ only, which can easily be computed from the posterior samples. Unfortunately,  a direct use of \eqref{eq:bloo2} induces instability because the importance
weights can have high variance. To address this issue, \cite{vehtari2017practical} applies a simple smoothing
procedure to the importance weights using a Pareto distribution. We provide an example hereafter.

\begin{example}
\label{ex:1}
We run the latent-dimension selection procedure on the four datasets 
in Example \ref{ex:1}. The below table reports the PSIS-LOO for different values of the dimension $d$. It can be observed how the selection procedure always selects the true dimension $d=2$.

 \begin{tabular}{c|cc|cc}   
 & \multicolumn{2}{c|}{{$m=50$}} & \multicolumn{2}{c}{{$m=150$}}\\
    {$d$} & {$|A_k|=3$} & {$|A_k|=5$} &  {$|A_k|=3$} & {$|A_k|=5$}  \\ \midrule
    1  & -882 & -1906 & -3213 &  -6108 \\
    2  & -34  & -118 & -69  &  -84  \\
    3  & -42  & -134 & -80  & -95 \\
    4  & -50  & -152 & -91  &  -109   \\  \bottomrule
\end{tabular}
\end{example}    
In Section \ref{sec:real}, we will show that the proposed PSIS-LOO-based forward procedure also works  on real datasets.

\subsection{Relation to (batch-)preference}
\label{sec:otherprefmodels}
For $d=1$ (the latent dimension is one), we have that $|C(A_k)|=1$. This means the subject always selects a single best object. In this case, the likelihood \eqref{eq:likelihoodexpanse0}, for a given $k$, simplifies to
  \begin{equation}
  \label{eq:likelihood1}
   \begin{aligned}
  &p((C(A_k),A_k)|{\bf u}(X))= \prod_{{\bf o} \in C(A_k)}\prod_{{\bf v} \in R(A_k)}\Phi\left(\frac{u({\bf o})-u({\bf v})}{\sigma}\right),
   \end{aligned}  
 \end{equation}
and reduces to the likelihood \eqref{eq:likelcdf0} when $|R(A_k)|=1$ (that is, in the binary case $|A_k|=2$). For $|A_k|>2$, the likelihood \eqref{eq:likelihood1} is a lower bound of the batch preference likelihood derived in \cite[Eq.3]{Siivola21}:
    \begin{equation}
  \label{eq:likelihoodbatch}
 \begin{aligned}
       \int &\left(\prod_{{\bf v} \in R(A_k)}\Phi\left(\frac{u({\bf o})+w_k-u({\bf v})}{\sigma}\right)\right)N(w_k;0,\sigma^2)dw_k,\\
   \end{aligned}
 \end{equation}
as proven in Appendix \ref{app:batch}.  The difference between \eqref{eq:likelihood1} and \eqref{eq:likelihoodbatch} comes from two different ways of modelling errors. The likelihood \eqref{eq:likelihoodbatch}  assumes that inconsistencies in Alice's preferences are due to an additive Gaussian noise perturbation of the true utility. The likelihood \eqref{eq:likelihood1} instead assumes that inconsistencies are due to a limit of discernability, that is the probability of error is inversely proportional to the difference between the utilities of the two objects to be compared (when this difference is  zero Alice has 50\% chance to select one object or another). Appendix \ref{app:batch} includes a further discussion about these two likelihoods.

\begin{figure*}
	\centering
  \includegraphics[height=4cm,trim={0.0cm 0.0cm 1.5cm 0.0cm }, clip]{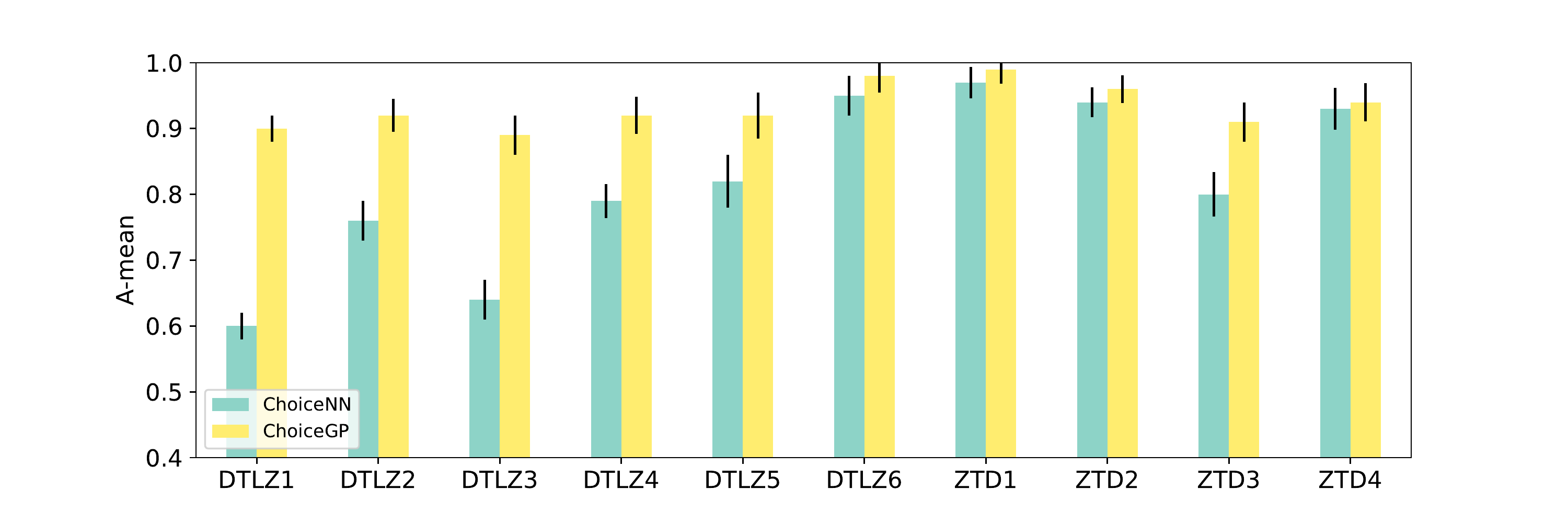}
	\caption{Comparisons ChoiceGP vs.\ ChoiceNN on 10 multi-criteria optimization problems. A-means are
averaged over 5 runs and error bars of 1 standard deviation are provided.}
	\label{fig:benchmarks}
\end{figure*}

\section{Experiments}
Our experiments aim to compare ChoiceGP with the state-of-the-art methods for choice functions and preference learning.
In section \ref{sec:vsNN}, we compare ChoiceGP
 against ChoiceNN \citep{pfannschmidt2020learning} on choice data simulated using multi-utility functions taken from
benchmark problems used in multi-criteria optimization.
 In section \ref{sec:gpgp}, using simulated preferences, we compare \textit{ChoiceGP} with \textit{Preferential GP} (PGP) \citep{ChuGhahramani_preference2005}, \textit{General Preferential GP} (GPGP) \citep{chau2022learning} and GP with data augmentation (PairGP) \citep{chau2022learning}.  PairGP solves the PL problem as an augmented binary classification problem.
In PairGP, skew-symmetry is further enforced by averaging the model outputs \citep[Sec.\ 3.3]{chau2022learning}. 
For PGP, GPGP and PairGP, we use the Laplace approximation to compute an approximation of the posterior. Finally in Section \ref{sec:gpgp}, we compare ChoiceGP, PGP, GPGP and PairGP using real-world datasets. For all
methods involving kernels, we use the Gaussian radial
basis function kernel with automatic relevance determination (ARD):
$k({\bf x},{\bf x}')=\exp\left(-\sum_{i=1}^c \frac{({x}_i-{x}_i')^2}{2 \ell_i^2}\right)$. The scale-parameter of the kernel is set to one, but instead we estimate the scaling parameter $\sigma$ of the likelihood.

\subsection{Benchmark optimisation problems}
\label{sec:vsNN}

 \paragraph{Data generation}
 In this section, we repeat  the experiment  in \citep[Sec. 4]{pfannschmidt2020learning}. Choice data are simulated using multi-utility functions taken from
benchmark problems used in multi-criteria optimization: the DTLZ test suite \citep{deb2005scalable} and the ZDT test suite \citep{zitzler2000comparison}, for a total of 10 benchmarks.
 In all experiments, the dimension of the choice-set is $|A|=10$ and each object ${\bf x}_i \in \mathbb{R}^6$ (6 features). A total of 40,960 choice sets are generated.  For the
DTLZ problems, the number of objective functions is set to 5 (i.e., $d=5$). As performance, \cite{pfannschmidt2020learning} used the average \textit{A-mean} (A-mean is the arithmetic mean of the true positive and true negative rate) of 5 repetitions of a Monte Carlo cross validation with a 90/10\% split into training and test data.
For ChoiceNN, the training instances are further split into 1/9 validation instances and 8/9
training instances in order to optimize the hyperparameters: (a) the loss weights $\alpha_1,\alpha_2,\alpha_3,\alpha_4$ (b) the number of hidden units and layers, using 60 iterations of Bayesian optimization. For both ChoiceNN and ChoiceGP, the latent dimension is equal to the number of objective functions.

\paragraph{Results} Figure \ref{fig:benchmarks} reports  the average A-mean of the two models when
predicting choices on held-out data.
ChoiceGP significantly outperforms ChoiceNN. We have found that this is due to  ChoiceNN not often being able to find latent functions that are consistent with the training data. The disadvantage of a parametric method, like ChoiceNN, is  that the latent utility functions depend nonlinearly on the optimisation parameters. Instead, in ChoiceGP, the values of the utility functions at the training data are part of the  the variational parameters and, therefore, can be more easily optimised.
We provide a simple example in  Appendix \ref{sec:counterexample} to illustrate this issue. For this reason, we have not included  ChoiceNN in the subsequent experiments.

\begin{figure*}
	\centering
	\begin{tabular}{ccc}
		\includegraphics[height=3cm,trim={0.0cm 0.0cm 1.5cm 0.0cm }, clip]{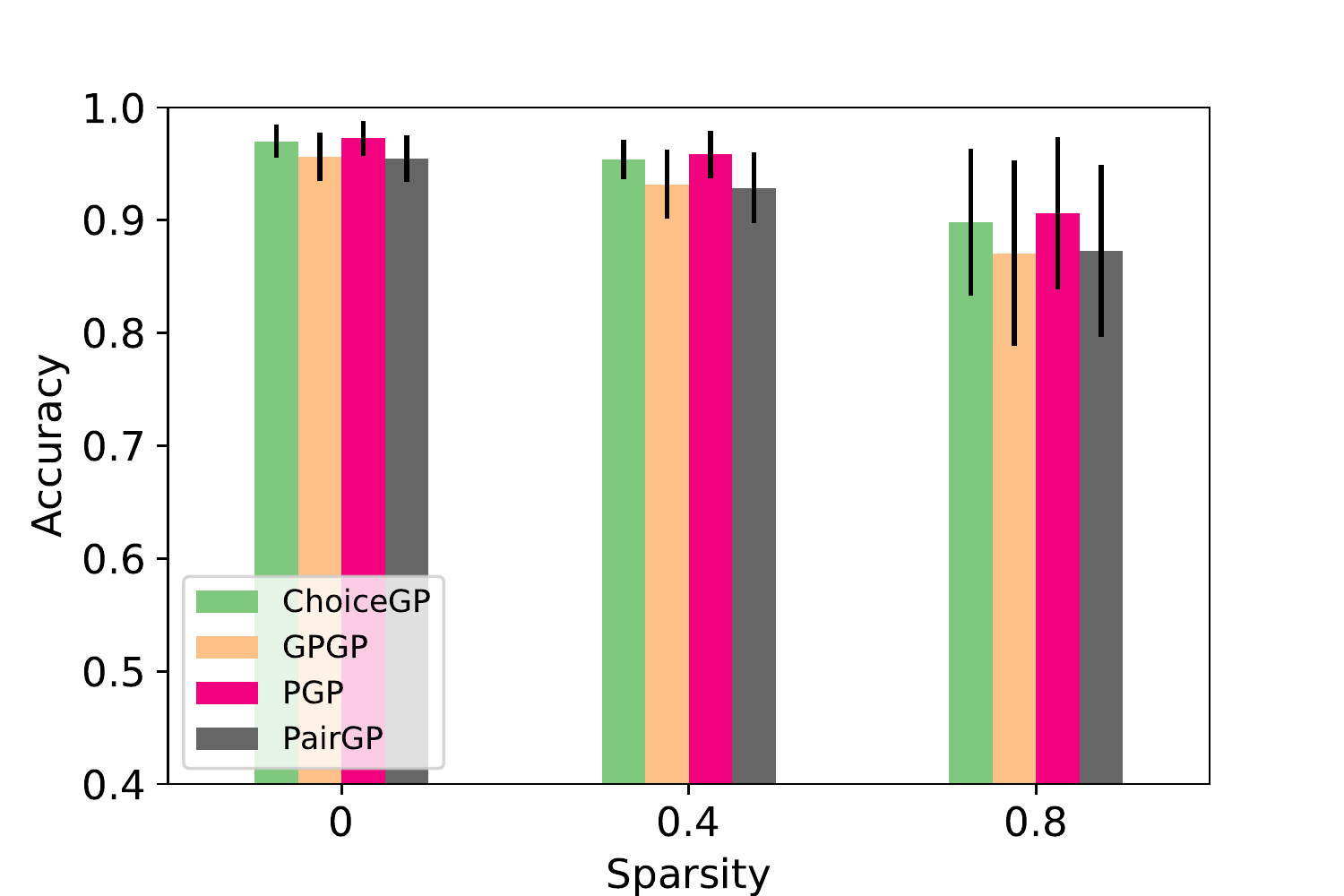} &
		\includegraphics[height=3.0cm,trim={0.0cm 0.0cm 1.5cm 0.0cm }, clip]{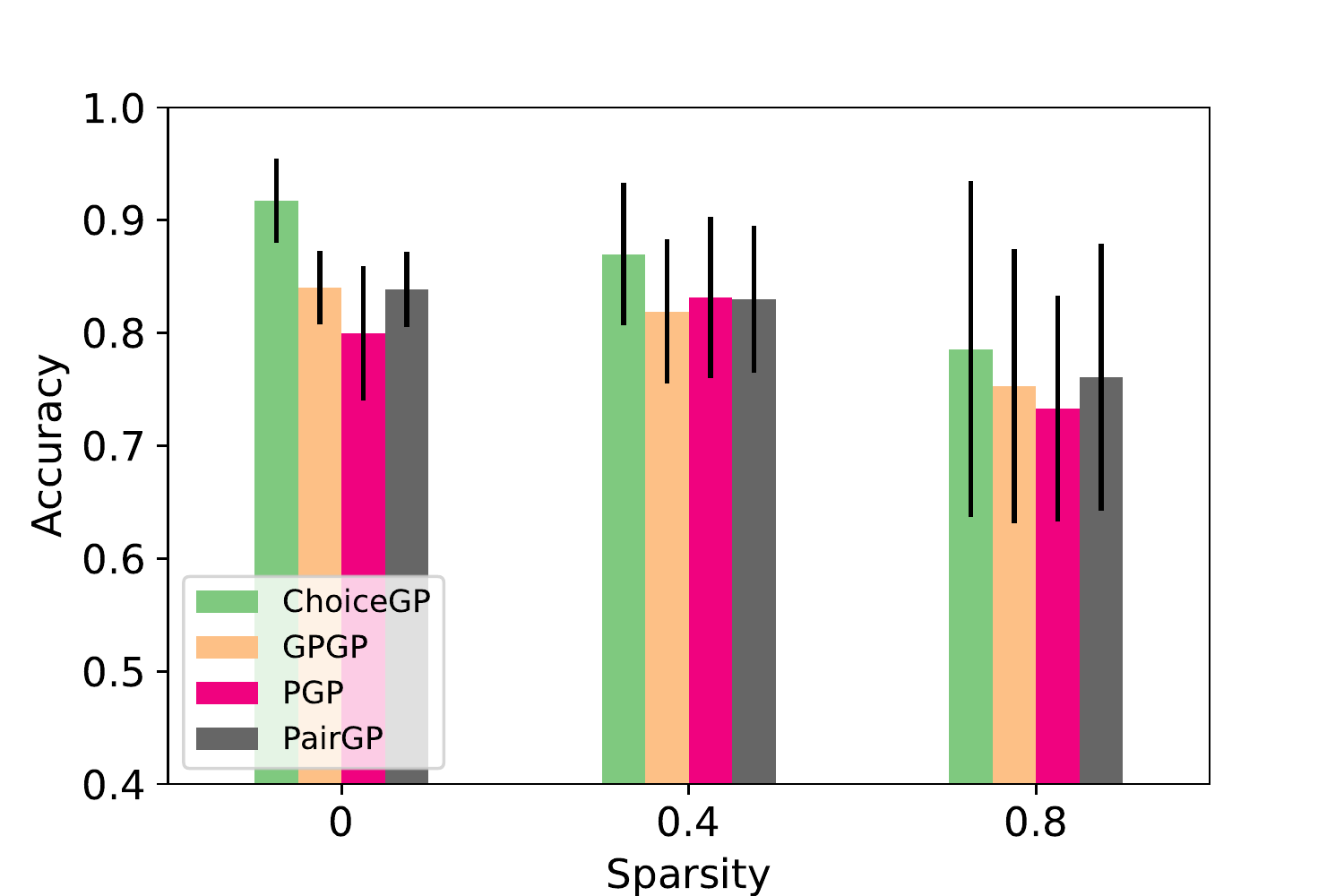} &
		\includegraphics[height=3.0cm,trim={0.0cm 0.0cm 1.5cm 0.0cm }, clip]{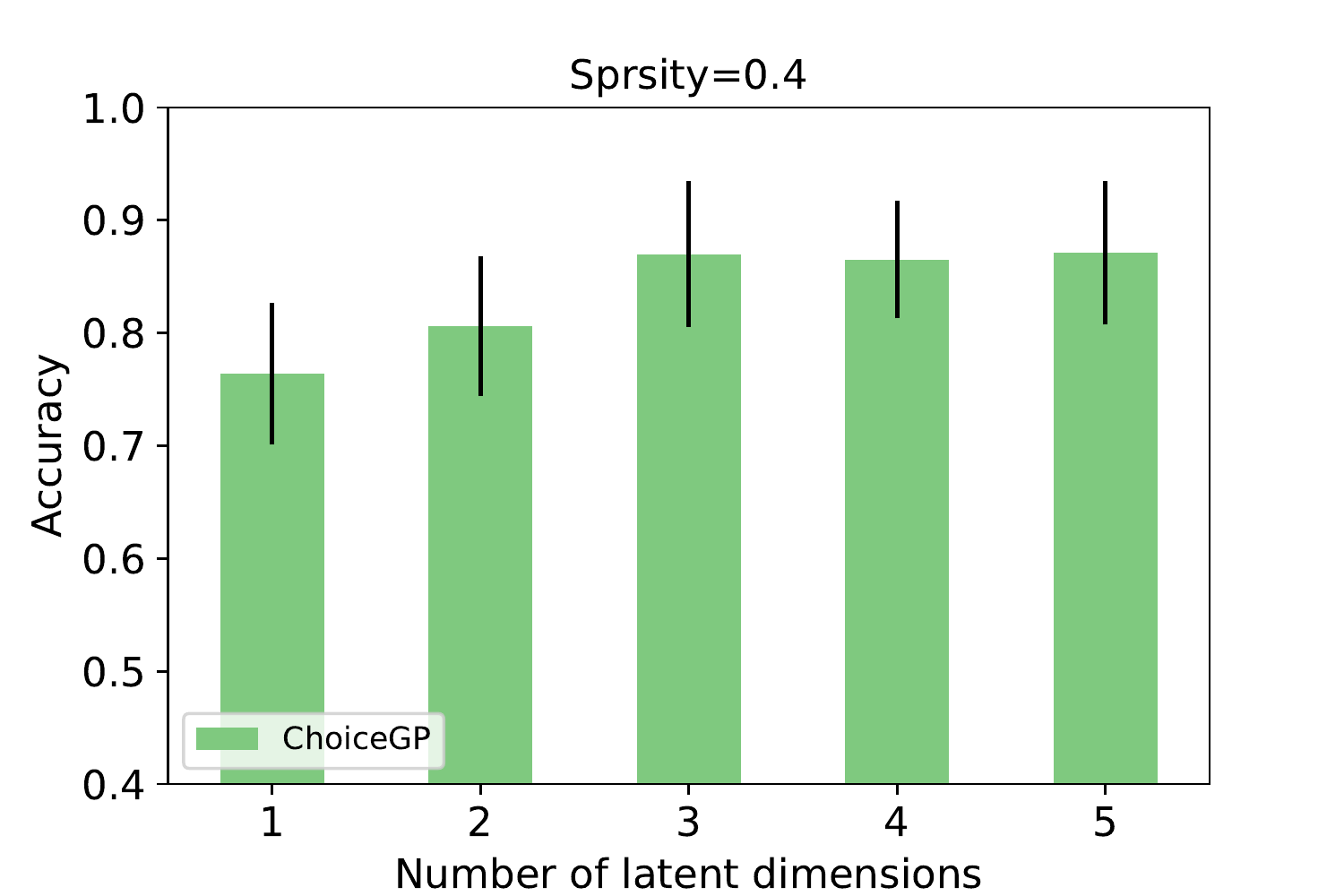} \\
		L=1 ($d=1$) & L=2 ($d=3$) & L=2 ($d=1,2,3,4,5$)\\
	\end{tabular}  
	\caption{Comparisons of algorithms for simulations at different sparsity and inconsistency level. Accuracies are
averaged over 20 runs and error bars of 1 standard deviation are provided}
	\label{fig:rescycle}
\end{figure*}  

\subsection{Inconsistent
preferences}
\label{sec:gpgp}  
 \paragraph{Data generation}
In this section, we repeat the experiment proposed in \citep[Sec. 4.1]{chau2022learning}.  Consider a data matrix $X \in \mathbb{R}^{n \times c}$. We randomly assign to each row a latent variable $z \in \{1,\dots,L\}$ and generate a set of utility functions $\{u_{z,z'}\}_{z,z'=1}^L$ , i.e. there is
a different utility function for each pair $z,z'$ of latent states. We impose the constraint  $u_{z,z'}=u_{z',z}$ and assume that
$u_{z,z'}({\bf x})=\sum_{j=1}^n \alpha_{j}^{z,z'}k({\bf x},{\bf x}_j)$
with each vector $\alpha_{j}^{z,z'}\stackrel{\text{i.i.d.}}{\sim} N(0, I_n)$.  

We generate two datasets. In the first dataset, as done in  \citep{chau2022learning}, the comparison between objects ${\bf x}_i,{\bf x}_j$ is  conducted based on the utility selected
by their latent states, i.e. ${\bf x}_i \succ {\bf x}_j$ iff $u_{z_i,z_j}({\bf x}_i)>u_{z_i,z_j}({\bf x}_j)$, leading to a dataset of $m$ pairwise comparisons:
$$
 \mathcal{D}^{(1)}_m = \{ {\bf x}^{(s)}_l \succ {\bf x}^{(s)}_r:~~ s = 1,\dots,m\}.
 $$
$\mathcal{D}^{(1)}_m$ includes preferences violating negative transitivity.  In the second dataset, the comparison between objects ${\bf x}_i,{\bf x}_j$ is  based on  Pareto's dominance criterion, i.e. ${\bf x}_i$ is chosen from the set $A=\{{\bf x}_i, {\bf x}_j\}$ iff $u_{z,z'}({\bf x}_i)>u_{z,z'}({\bf x}_j)$ for all $z,z'$. This naturally originates into a choice dataset:
$$
 \mathcal{D}^{(2)}_m = \{ (C(A_s),A_s):~~ s = 1,\dots,m\},
 $$   
 where $A_s=\{{\bf x}^{(s)}_l,{\bf x}^{(s)}_r\}$ (the same pairs as in $ \mathcal{D}^{(1)}_m$) and $C(A_s)$ equal to either $\{{\bf x}^{(s)}_l\}$ if $u_{z,z'}({\bf x}^{(s)}_l)>u_{z,z'}({\bf x}^{(s)}_r)$ for all $z,z'$ or to $\{{\bf x}^{(s)}_l,{\bf x}^{(s)}_r\}$ otherwise.   Dataset $\mathcal{D}^{(2)}_m$ corresponds to a scenario in which Alice is allowed to express incomplete judgments while,  in  $\mathcal{D}^{(1)}_m$, Alice is compelled to always choose between two objects.
 

 \paragraph{Results} Figure \ref{fig:rescycle}
 reports  the accuracy for the 4 models when
predicting preferences on held-out data.
In the case $L=1~(d=1)$, ChoiceGP and PGP coincide, since there is only one utility function. They outperform both GPGP and PairGP.
In the case $L=2~(d=3)$, ChoiceGP outperforms the other three methods.
The last figure $L=2~(d=1,2,3,4,5)$ reports the accuracy for ChoiceGP in the medium sparsity regime as a function of the latent dimension $d$. It can be observed as the accuracy increases until $d=3$ (corresponding to the true latent dimension) and then remains stable.
This shows that there is no overfitting at the increasing of the latent dimension.

These experiments show that, when inconsistencies in preference assessments are due to multiple  conflicting utilities, it is better to allow Alice to express incomparability judgements instead of compelling Alice to always choose a preferred object.

\subsection{Real datasets}  
\label{sec:real} 
We now focus on five benchmark
datasets -- AM, EDM, Jura, Slump, Vehicle --  for multi-output regression problems. We  use the three output variables as utility functions to generate choice data.
For instance, in the Additive Manufacturing (AM) dataset, we consider 6 features (layer height, nozzle temperature, bed temperature, print speed	material,fan speed) and we use the three outputs (roughness, tension strength, elongation) to generate choice data. More details on the datasets is provided in Appendix \ref{app:real}. For each dataset, we set $|A_k|=2$ and use the  three output variables to generate a dense choice set $\{(C(A_k),A_k):~k=1,\dots,m\}$.  By using a sparsity level equal to $0.4$ we randomly generate a training set and use the remaining choice pairs as test set (we repeat this process 5 times  generating a total of $25$ datasets).
 
Since PGP, GPGP and PairGP cannot deal with conflicts among preferences (and so with choice data), we consider two common ways\footnote{Note that, there are criteria to deal with conflicts which  generate consistent preferences, for instance by weighting the utilities. In these cases, PGP and ChoiceGP are  best models.} to deal with these conflicts: (1) random selection; (2)  looking for the alternatives that are favoured by most (but not necessarily all) of the preference criteria. We generate preference data from the above choice datasets as follows. For each $A_k=\{{\bf x}_i,{\bf x}_j\}$ if $C(A_k)=\{{\bf x}_i\}$ then ${\bf x}_i \succ {\bf x}_j$. Instead, whenever $C(A_k)=\{{\bf x}_i,{\bf x}_j\}$, we generate  preference data in two ways: \textbf{random:}  coin flip: ${\bf x}_i \succ {\bf x}_j$ if Heads; ${\bf x}_j \succ {\bf x}_i$ if Tails; \textbf{majority rule:}   ${\bf x}_i \succ {\bf x}_j$ if ${\bf x}_i$ is better than  ${\bf x}_j$ with respect to 2 out of 3 outputs. The following table reports the averaged accuracy. 

 \begin{center}
   \scalebox{0.74}{
\begin{tabular}{c|ccccc}
&    \textbf{ChoiceGP} &  &   \textbf{PGP} &   \textbf{GPGP} & \textbf{PairGP} \\
\hline  
\multirow{2}{*}{AM} & \multirow{2}{*}{{0.90}} & maj.  & $0.84$ & $0.86$ & $0.87 $ \\
& & rand.  & $0.74 $ & $0.73 $ & $0.73 8$ \\
\cline{2-6}
\hline  
\multirow{2}{*}{EDM} & \multirow{2}{*}{{0.88}} & maj.  & $0.83 $ & $0.80 $ & $0.83$ \\
& & rand.  & $0.84 $ & $0.82 $ & $0.82 $ \\
\cline{2-6}
\hline  
\multirow{2}{*}{Jura} & \multirow{2}{*}{{0.91}} & maj.  & $0.87 $ & $0.87 $ & $0.87 $ \\
& & rand.  & $0.84$ & $0.82$ & $0.82$ \\
\cline{2-6}
\hline  
\multirow{2}{*}{Slump} & \multirow{2}{*}{{0.91}} & maj.  & {0.93} & $0.90 $ & $0.90 $ \\
& & rand.  & $0.83 $ & $0.79 $ & $0.79$ \\
\cline{2-6}
\hline  
\multirow{2}{*}{Vehicle} & \multirow{2}{*}{{0.93}} & maj.  & $0.89$ & $0.90$ & $0.90$ \\
& & rand.  & $0.80$ & $0.80$ & $0.80$ \\
\cline{2-6}
\hline 
\end{tabular}}
 \end{center}
It can be noted that ChoiceGP overall outperforms PGP, GPGP and PairGP in both the \textit{random} and \textit{majority} rule scenario. The only exception is the dataset \textit{slump}, where PGP has higher accuracy in  the \textit{majority} rule scenario. In appendix \ref{app:real}, we perform a statistical analysis of the results to show that the difference between the algorithms is practically and statistically significant.

 Finally, we run the latent-dimension selection procedure on the five datasets. The below table reports the PSIS-LOO for different values of the dimension $d$ in one of the 5 MC repetitions. It can be observed how the selection procedure always selects the true dimension $d=3$.
 This happens consistently in 5 out of the 5 repetitions and demonstrates both the accuracy and reliability of the proposed latent dimension selection procedure.

  \begin{center}
   \scalebox{0.8}{
 \begin{tabular}{c|ccccc}   
    {$d$} & AM& EDM &  Jura & Slump & Vehicle  \\ \midrule
    1  & -16335 & -5646 & -11182  & -10754  & -12692\\
    2  & -186  & -138 &  -254  &  -211  &-272\\
    3  & \textbf{-152} & \textbf{-136} & \textbf{-194}  & \textbf{-179} &\textbf{-174}\\
    4  &  -160&  -158 &  -207 &  -199 & -179 \\  
    5  &   -170 & -173 &  -223 & -234  & -205 \\\bottomrule
\end{tabular}}
\end{center}      
  
\section{Conclusions}
 We have developed a Gaussian Process based-method to learn choice functions from choice data via Pareto rationalization. As future work, we plan extend this model to deal with context-dependent choice functions \citep{pfannschmidt2022learning} as well as consider choice under uncertainty \citep{seidenfeld2010coherent,de2019interpreting}.

 \begin{acknowledgements}
 For the first author, this publication has emanated from research conducted with the financial support of the EU Commission Recovery and Resilience Facility under the Science Foundation Ireland Future Digital Challenge Grant Number 22/NCF/FD/10827.
 The authors thank Karlson Pfannschmidt for sharing the code for ChoiceNN and Siu Lun Chau for sharing the data generation code for the numerical experiments in the GPGP paper. ~\\
 
\paragraph{Software}
 The Python implementation of ChoiceGP is available at https://github.com/benavoli/ChoiceGP
\end{acknowledgements}

\bibliographystyle{apalike}
\bibliography{biblio}

\begin{thebibliography}{}

\bibitem[Aiolli and Sperduti, 2004]{aiolli2004learning}
Aiolli, F. and Sperduti, A. (2004).
\newblock Learning preferences for multiclass problems.
\newblock {\em Advances in neural information processing systems}, 17.

\bibitem[Bauer et~al., 2016]{bauer2016understanding}
Bauer, M., van~der Wilk, M., and Rasmussen, C.~E. (2016).
\newblock Understanding probabilistic sparse {G}aussian process approximations.
\newblock In {\em Advances in neural information processing systems}, pages
  1533--1541.

\bibitem[Benavoli et~al., 2020]{benavoli2020skew}
Benavoli, A., Azzimonti, D., and Piga, D. (2020).
\newblock {Skew Gaussian Processes for Classification}.
\newblock {\em Machine Learning}, 109:1877–1902.

\bibitem[Benavoli et~al., 2021a]{benavoli2021}
Benavoli, A., Azzimonti, D., and Piga, D. (2021a).
\newblock {A unified framework for closed-form nonparametric regression,
  classification, preference and mixed problems with Skew Gaussian Processes}.
\newblock {\em Machine Learning}, pages 1--39.

\bibitem[Benavoli et~al., 2021b]{benavoli2020preferential}
Benavoli, A., Azzimonti, D., and Piga, D. (2021b).
\newblock {Preferential Bayesian optimisation with Skew Gaussian Processes}.
\newblock In {\em 2021 Genetic and Evolutionary Computation Conference
  Companion (GECCO '21 Companion), July 10--14, 2021, Lille, France}, New York,
  NY, USA. ACM.

\bibitem[Bradbury et~al., 2018]{jax2018github}
Bradbury, J., Frostig, R., Hawkins, P., Johnson, M.~J., Leary, C., Maclaurin,
  D., Necula, G., Paszke, A., Vander{P}las, J., Wanderman-{M}ilne, S., and
  Zhang, Q. (2018).
\newblock {JAX}: composable transformations of {P}ython+{N}um{P}y programs.

\bibitem[Chau et~al., 2022]{chau2022learning}
Chau, S.~L., Gonzalez, J., and Sejdinovic, D. (2022).
\newblock Learning inconsistent preferences with gaussian processes.
\newblock In {\em International Conference on Artificial Intelligence and
  Statistics}, pages 2266--2281. PMLR.

\bibitem[Chu and Ghahramani, 2005]{ChuGhahramani_preference2005}
Chu, W. and Ghahramani, Z. (2005).
\newblock Preference learning with gaussian processes.
\newblock New York, NY, USA. Association for Computing Machinery.

\bibitem[Cohen et~al., 1997]{cohen1997learning}
Cohen, W.~W., Schapire, R.~E., and Singer, Y. (1997).
\newblock Learning to order things.
\newblock {\em Advances in neural information processing systems}, 10.

\bibitem[Corani et~al., 2017]{corani2017statistical}
Corani, G., Benavoli, A., Dem{\v{s}}ar, J., Mangili, F., and Zaffalon, M.
  (2017).
\newblock Statistical comparison of classifiers through bayesian hierarchical
  modelling.
\newblock {\em Machine Learning}, 106:1817--1837.

\bibitem[De~Bock and De~Cooman, 2019]{de2019interpreting}
De~Bock, J. and De~Cooman, G. (2019).
\newblock Interpreting, axiomatising and representing coherent choice functions
  in terms of desirability.
\newblock In {\em International Symposium on Imprecise Probabilities: Theories
  and Applications}, pages 125--134. PMLR.

\bibitem[Deb et~al., 2005]{deb2005scalable}
Deb, K., Thiele, L., Laumanns, M., and Zitzler, E. (2005).
\newblock Scalable test problems for evolutionary multiobjective optimization.
\newblock In {\em Evolutionary multiobjective optimization}, pages 105--145.
  Springer.

\bibitem[Debreu, 1954]{debreu1954representation}
Debreu, G. (1954).
\newblock Representation of a preference ordering by a numerical function.
\newblock {\em Decision processes}, 3:159--165.

\bibitem[Domshlak et~al., 2011]{domshlak2011preferences}
Domshlak, C., H{\"u}llermeier, E., Kaci, S., and Prade, H. (2011).
\newblock Preferences in ai: An overview.
\newblock {\em Artificial Intelligence}, 175(7-8):1037--1052.

\bibitem[Eliaz and Ok, 2006]{eliaz2006indifference}
Eliaz, K. and Ok, E.~A. (2006).
\newblock Indifference or indecisiveness? choice-theoretic foundations of
  incomplete preferences.
\newblock {\em Games and economic behavior}, 56(1):61--86.

\bibitem[Fiechter and Rogers, 2000]{fiechter2000learning}
Fiechter, C.-N. and Rogers, S. (2000).
\newblock Learning subjective functions with large margins.
\newblock In {\em ICML}, pages 287--294.

\bibitem[Fishburn, 1988]{fishburn1988nonlinear}
Fishburn, P.~C. (1988).
\newblock {\em Nonlinear preference and utility theory}, volume~5.
\newblock Wheatsheaf Books.

\bibitem[F{\"u}rnkranz and H{\"u}llermeier, 2010]{furnkranz2010preferencebook}
F{\"u}rnkranz, J. and H{\"u}llermeier, E. (2010).
\newblock {\em Preference learning}.
\newblock Springer.

\bibitem[Gelfand et~al., 1992]{gelfand1992model}
Gelfand, A.~E., Dey, D.~K., and Chang, H. (1992).
\newblock Model determination using predictive distributions with
  implementation via sampling-based methods.
\newblock Technical report, Stanford Univ CA Dept of Statistics.

\bibitem[{GPy}, 2012]{gpy2014}
{GPy} (since 2012).
\newblock {GPy}: A gaussian process framework in python.
\newblock \url{http://github.com/SheffieldML/GPy}.

\bibitem[Har-Peled et~al., 2002]{har2002constraint}
Har-Peled, S., Roth, D., and Zimak, D. (2002).
\newblock Constraint classification: A new approach to multiclass
  classification.
\newblock In {\em International conference on algorithmic learning theory},
  pages 365--379. Springer.

\bibitem[Hensman et~al., 2013]{Hensman2013}
Hensman, J., Fusi, N., and Lawrence, N.~D. (2013).
\newblock Gaussian processes for big data.
\newblock In {\em Proceedings of the Twenty-Ninth Conference on Uncertainty in
  Artificial Intelligence}, UAI'13, page 282–290, Arlington, Virginia, USA.
  AUAI Press.

\bibitem[Hensman et~al., 2015]{hensman2015scalable}
Hensman, J., Matthews, A., and Ghahramani, Z. (2015).
\newblock Scalable variational gaussian process classification.
\newblock In {\em Artificial Intelligence and Statistics}, pages 351--360.
  PMLR.

\bibitem[Herbrich et~al., 1998]{herbrich1998learning}
Herbrich, R., Graepel, T., Bollmann-Sdorra, P., and Obermayer, K. (1998).
\newblock Learning preference relations for information retrieval.
\newblock In {\em ICML-98 Workshop: text categorization and machine learning},
  pages 80--84.

\bibitem[Hern{\'a}ndez-Lobato and Hern{\'a}ndez-Lobato,
  2016]{hernandez2016scalable}
Hern{\'a}ndez-Lobato, D. and Hern{\'a}ndez-Lobato, J.~M. (2016).
\newblock Scalable gaussian process classification via expectation propagation.
\newblock In {\em Artificial Intelligence and Statistics}, pages 168--176.

\bibitem[Houlsby et~al., 2011]{houlsby2011bayesian}
Houlsby, N., Husz{\'a}r, F., Ghahramani, Z., and Lengyel, M. (2011).
\newblock Bayesian active learning for classification and preference learning.
\newblock {\em arXiv preprint arXiv:1112.5745}.

\bibitem[Kreps et~al., 1990]{kreps1990course}
Kreps, D.~M. et~al. (1990).
\newblock {\em A course in microeconomic theory}.
\newblock Princeton university press.

\bibitem[Luce, 1956]{luce1956semiorders}
Luce, R.~D. (1956).
\newblock Semiorders and a theory of utility discrimination.
\newblock {\em Econometrica, Journal of the Econometric Society}, pages
  178--191.

\bibitem[MacKay, 1996]{mackay1996bayesian}
MacKay, D.~J. (1996).
\newblock Bayesian methods for backpropagation networks.
\newblock In {\em Models of neural networks III}, pages 211--254. Springer.

\bibitem[Moulin, 1985]{moulin1985choice}
Moulin, H. (1985).
\newblock Choice functions over a finite set: a summary.
\newblock {\em Social Choice and Welfare}, 2(2):147--160.

\bibitem[Opper and Archambeau, 2009]{opper2009variational}
Opper, M. and Archambeau, C. (2009).
\newblock The variational gaussian approximation revisited.
\newblock {\em Neural computation}, 21(3):786--792.

\bibitem[Pahikkala et~al., 2010]{pahikkala2010learning}
Pahikkala, T., Waegeman, W., Tsivtsivadze, E., Salakoski, T., and De~Baets, B.
  (2010).
\newblock Learning intransitive reciprocal relations with kernel methods.
\newblock {\em European Journal of Operational Research}, 206(3):676--685.

\bibitem[Parmigiani and Inoue, 2009]{parmigiani2009decision}
Parmigiani, G. and Inoue, L. (2009).
\newblock {\em Decision theory: Principles and approaches}.
\newblock John Wiley \& Sons.

\bibitem[Pfannschmidt et~al., 2022]{pfannschmidt2022learning}
Pfannschmidt, K., Gupta, P., Haddenhorst, B., and H{\"u}llermeier, E. (2022).
\newblock Learning context-dependent choice functions.
\newblock {\em International Journal of Approximate Reasoning}, 140:116--155.

\bibitem[Pfannschmidt and H{\"u}llermeier, 2020]{pfannschmidt2020learning}
Pfannschmidt, K. and H{\"u}llermeier, E. (2020).
\newblock Learning choice functions via pareto-embeddings.
\newblock In {\em German Conference on Artificial Intelligence (K{\"u}nstliche
  Intelligenz)}, pages 327--333. Springer.

\bibitem[Qui{\~n}onero-Candela and Rasmussen, 2005]{quinonero2005unifying}
Qui{\~n}onero-Candela, J. and Rasmussen, C.~E. (2005).
\newblock A unifying view of sparse approximate {G}aussian process regression.
\newblock {\em Journal of Machine Learning Research}, 6(Dec):1939--1959.

\bibitem[Ross, 2013]{ROSS2013153}
Ross, S. (2013).
\newblock Chapter 9 - variance reduction techniques.
\newblock In Ross, S., editor, {\em Simulation (Fifth Edition)}, pages
  153--231. Academic Press, fifth edition edition.

\bibitem[Schuerch et~al., 2020]{SCHURCH2020}
Schuerch, M., Azzimonti, D., Benavoli, A., and Zaffalon, M. (2020).
\newblock {Recursive estimation for sparse Gaussian process regression}.
\newblock {\em Automatica}, 120:109--127.

\bibitem[Schuerch et~al., 2023]{schuch2023correlated}
Schuerch, M., Azzimonti, D., Benavoli, A., and Zaffalon, M. (2023).
\newblock Correlated product of experts for sparse gaussian process regression.
\newblock {\em Machine Learning}.

\bibitem[Seidenfeld et~al., 2010]{seidenfeld2010coherent}
Seidenfeld, T., Schervish, M.~J., and Kadane, J.~B. (2010).
\newblock Coherent choice functions under uncertainty.
\newblock {\em Synthese}, 172(1):157--176.

\bibitem[Shafer, 1974]{shafer1974nontransitive}
Shafer, W.~J. (1974).
\newblock The nontransitive consumer.
\newblock {\em Econometrica: Journal of the Econometric Society}, pages
  913--919.

\bibitem[Siivola et~al., 2021]{Siivola21}
Siivola, E., Dhaka, A.~K., Andersen, M.~R., González, J., Moreno, P.~G., and
  Vehtari, A. (2021).
\newblock Preferential batch bayesian optimization.
\newblock In {\em 2021 IEEE 31st International Workshop on Machine Learning for
  Signal Processing (MLSP)}, pages 1--6.

\bibitem[Snelson and Ghahramani, 2006]{snelson2006sparse}
Snelson, E. and Ghahramani, Z. (2006).
\newblock {Sparse Gaussian processes using pseudo-inputs}.
\newblock In {\em Advances in neural information processing systems}, pages
  1257--1264.

\bibitem[Thurstone, 1927]{thurstone2017law}
Thurstone, L.~L. (1927).
\newblock A law of comparative judgment.
\newblock {\em Decision processes}, Psychological Review:273--286.

\bibitem[Titsias, 2009]{pmlrv5titsias09a}
Titsias, M. (2009).
\newblock Variational learning of inducing variables in sparse {G}aussian
  processes.
\newblock In van Dyk, D. and Welling, M., editors, {\em Proceedings of the
  Twelth International Conference on Artificial Intelligence and Statistics},
  volume~5 of {\em Proceedings of Machine Learning Research}, pages 567--574,
  Hilton Clearwater Beach Resort, Clearwater Beach, Florida USA. PMLR.

\bibitem[Vehtari et~al., 2017]{vehtari2017practical}
Vehtari, A., Gelman, A., and Gabry, J. (2017).
\newblock Practical bayesian model evaluation using leave-one-out
  cross-validation and waic.
\newblock {\em Statistics and computing}, 27(5):1413--1432.

\bibitem[Williams and Barber, 1998]{williams1998bayesian}
Williams, C.~K. and Barber, D. (1998).
\newblock Bayesian classification with gaussian processes.
\newblock {\em IEEE Transactions on Pattern Analysis and Machine Intelligence},
  20(12):1342--1351.

\bibitem[Yang et~al., 2005]{yang2005metamodeling}
Yang, R., Wang, N., Tho, C., Bobineau, J., and Wang, B. (2005).
\newblock Metamodeling development for vehicle frontal impact simulation.

\bibitem[Zitzler et~al., 2000]{zitzler2000comparison}
Zitzler, E., Deb, K., and Thiele, L. (2000).
\newblock Comparison of multiobjective evolutionary algorithms: Empirical
  results.
\newblock {\em Evolutionary computation}, 8(2):173--195.

\end{thebibliography}

\appendix
\onecolumn
\newpage

 \section{Vectorisation of the likelihood}
 \label{app:like}
 The product in the second row in \eqref{eq:likelihoodexpanse0} is a probabilistic relaxation of \eqref{eq:likcondpareto2}. Since it always involves comparisons between pairs of objects, it can be vectorized as follows:
   \begin{equation}
  \label{eq:likelihoodexpanseapp0}
 \begin{aligned}
      &\prod_{k=1}^m\prod_{\{{\bf o},{\bf v}\} \in C_2(A_k)}\left( 1-\prod_{i=1}^d \Phi\left(\frac{u_i({\bf o})-u_i({\bf v})}{\sigma}\right)-\prod_{i=1}^d \Phi\left(\frac{u_i({\bf v})-u_i({\bf o})}{\sigma}\right)\right)
   = \prod_{{\bf a}_i \in \mathcal{A}}\left( 1- \Phi_d\left(\frac{{\bf a}_i{\bf u}(X)}{\sigma}\right)-\Phi_d\left(\frac{-{\bf a}_i{\bf u}(X)}{\sigma}\right)\right),\\
   \end{aligned}
 \end{equation}
 where there is a vector ${\bf a}_k \in \mathbb{R}^{1 \times t}$ for each pairs $\{{\bf x}_i,{\bf x}_j\} \in C(A_k)$  with ${\bf x}_i\neq {\bf x}_j$. ${\bf a}_k$ is a zero vector whose $i$-th and $j$-th elements are equal to $1$ and respectively, $-1$, and $\Phi_d$ is the CDF of d-dimensional standard multivariate Gaussian distribution. 
      
 The product in the last row in \eqref{eq:likelihoodexpanse0} is a probabilistic relaxation of \eqref{eq:likcondpareto1}. It cannot be easily vectorized because $\prod_{{\bf o} \in C(A_k)}$ has a varying number of terms depending on $k$.   To overcome this issue we assume, as usually done in decision theory (see for instance \citep[Sec.3.4.2]{parmigiani2009decision}), the existence a worst object $\boldsymbol{\omega}\in \mathcal{X}$, that is an object such that $u_i(\boldsymbol{\omega})=-\infty$ for each $i=1,\dots,d$.
 This allows us to compare any ${\bf v} \in C(A_k)$ with the same number of elements (either ${\bf o} \in C(A_k)$ or $\boldsymbol{\omega}$).
 
 Assume for instance that $|A_k|=5$, $R(A_k)=\{{\bf v}_1,{\bf v}_2,{\bf v}_3\}$ and $C(A_k)=\{{\bf o}_1,{\bf o}_2\}$, then the  product in the last row in \eqref{eq:likelihoodexpanse0}
    \begin{equation}
  \label{eq:likelihoodexpanseapp1}
 \begin{aligned}
      &\prod_{{\bf o} \in C(A_k)} \left(1- \prod_{i=1}^d \Phi\left(\frac{u_i({\bf o})-u_i({\bf v}_1)}{\sigma}\right)\right)\prod_{{\bf o} \in C(A_k)} \left(1- \prod_{i=1}^d \Phi\left(\frac{u_i({\bf o})-u_i({\bf v}_2)}{\sigma}\right)\right)\prod_{{\bf o} \in C(A_k)} \left(1- \prod_{i=1}^d \Phi\left(\frac{u_i({\bf o})-u_i({\bf v}_1)}{\sigma}\right)\right)\\
   \end{aligned}
 \end{equation}
For each ${\bf v}_j$, we can write each product as
    \begin{equation}
  \label{eq:likelihoodexpanseapp2}
 \begin{aligned}
      &\prod_{{\bf o} \in C(A_k)} \left(1- \prod_{i=1}^d \Phi\left(\frac{u_i({\bf o})-u_i({\bf v}_j)}{\sigma}\right)\right)
      = \left(1- \prod_{i=1}^d \Phi\left(\frac{u_i({\bf o}_1)-u_i({\bf v}_j)}{\sigma}\right)\right) \left(1- \prod_{i=1}^d \Phi\left(\frac{u_i({\bf o}_2)-u_i({\bf v}_j)}{\sigma}\right)\right)\\
       =& \left(1- \prod_{i=1}^d \Phi\left(\frac{u_i({\bf o}_1)-u_i({\bf v}_j)}{\sigma}\right)\right) \left(1- \prod_{i=1}^d \Phi\left(\frac{u_i({\bf o}_2)-u_i({\bf v}_j)}{\sigma}\right)\right)\left(1- \prod_{i=1}^d \Phi\left(\frac{u_i(\boldsymbol{\omega})-u_i({\bf v}_j)}{\sigma}\right)\right)^3\\
   =& \prod_{{\bf b}_i \in \mathcal{B}}\left( 1- \Phi_d\left(\frac{{\bf b}_i{\bf u}(\tilde{X})}{\sigma}\right)\right),\\
   \end{aligned}
 \end{equation}
 where $\tilde{X}=[X,\boldsymbol{\omega}]^\top$. ${\bf b}_i \in \mathbb{R}^{1 \times (t+1)}$
 for each compared pairs $\{{\bf x}_i,{\bf x}_j\}$  with ${\bf x}_i\neq {\bf x}_j$, is a zero vector whose $i$-th and $j$-th elements are equal to $1$ and respectively, $-1$,
 
 \section{Label switching problem}
 \label{app:LA} 
The Laplace Approximation (LA) cannot be applied due to the so-called `label switching' problem, which is caused by symmetry in the likelihood: any permutation of the labels $i=1,\dots,d$ yields the same likelihood. For this reason, the Hessian of the log-likelihood w.r.t.\ ${\bf u}(X)$ is in general an indefinite matrix and LA is not well-defined.

Consider for instance the case where $d=2$ 
and $C(A_1)=\{\bx_a,\bx_b\}$ is the only choice data we have, the likelihood is 
$$
L = \left(1- \prod_{i=1}^2 \Phi\left(\frac{u_i({\bf x}_a)-u_i({\bf x}_b)}{\sigma}\right)- \prod_{i=1}^2 \Phi\left(\frac{u_i({\bf x}_b)-u_i({\bf x}_a)}{\sigma}\right)\right)
$$
and it is symmetric to the switching of $u_1$ and $u_2$.

The Variational Approximation is also affected by this problem, but it is well-defined. It will simply converge to one of the symmetric (to label switching) components of the distribution.

  \section{Variational Inference}
 \label{app:VI} 
 We implemented our model using automatic-differentiation in Jax   \citep{jax2018github}. 
 
 For the Variational Approximation (VA), we use the implementation in \citep{opper2009variational}, which has $2t$ parameters with $t=|X|$. In particular, we consider the covariance matrix in \citep[Equation (10)]{opper2009variational}. This means we only need $t$ parameters for the covariance matrix of the VA distribution. This is an approximation, but it allows us reduce the computational load of ChoiceGP, which is composed by $d$ GPs.
 
Indeed, by exploiting the above parametrisation and the factorised prior \eqref{eq:prior}, our ChoiceGP model can be implemented efficiently. We need
 storing and inverting $d$  kernel matrices with dimension $t \times t$.

We initialise the Variational Approximation  with MAP estimate and then perform 5000 iterations.

 \section{Interpretation of the probit likelihood}
  \label{app:batch}
 There are two ways to interpret the likelihood \eqref{eq:likelcdf0}:
 \begin{enumerate}
  \item \textit{Limit of discernibility:} Alice may make mistakes when comparing two objects ${\bf x}_i,{\bf x}_j$ whose difference in utility is small (e.g., errors are inversely proportional to the difference between the two utilities $|u({\bf x}_i)-u({\bf x}_j)|$). 
  \item \textit{Noise:} the observed utility function differs from
the true utility function due to disturbances
(e.g., $o({\bf x}_i)=u({\bf x}_i)+\text{noise}$).
 \end{enumerate}
 In this second case, it is well known that
\begin{equation}
    \label{eq:likelcdf0aa}
p({\bf x}_i \succ {\bf x}_j|u)=\Phi\left(\frac{u({\bf x}_i)-u({\bf x}_j)}{\sqrt{2}\sigma}\right)=\int I_{u({\bf x}_i)+w_i-u({\bf x}_j)-w_j>0}N(w_i;0,\sigma^2)N(w_j;0,\sigma^2)dw_i dw_j.
\end{equation}
 
 There is no  correct interpretation -- it depends on the  ``error-model'' we assume to account for  the inconsistencies in the subject's preferences. 
 
 For instance, for the computer example in Section \ref{sec:intro}, assuming that inconsistencies  are due to a Gaussian noise model does not make much sense. The features of the computer are observed exactly (without any noise). Instead, it is reasonable to assume that two different computers,  which only have slightly different characteristics, are indiscernible for Alice. For this reason, she may state inconsistent preferences when comparing them.
 
 Similarly, there may be cases where the utility is observed through a noisy measurement and, therefore, the second interpretation is more correct in this case.
 
 The issue arises when we compare the same objects multiple times. Assuming that Alice chooses ${\bf o}$ and discards the elements in $R(A_k)$, this leads to the following batch-likelihood  
 \begin{equation}
  \label{eq:like1}
   \begin{aligned}
      \int &\left(\prod_{{\bf v} \in R(A_k)}\Phi\left(\frac{u({\bf o})+w_k-u({\bf v})}{\sigma}\right)\right)N(w_k;0,\sigma^2)dw_k,\\
   \end{aligned}
 \end{equation}
 for the case $d=1$ (single utility) and noise-model, which is different from the limit-of-discernibility error-model
 \begin{equation}
  \label{eq:like2}
 \begin{aligned}
       \prod_{{\bf v} \in R(A_k)}\Phi\left(\frac{u({\bf o})-u({\bf v})}{\sigma}\right).\\
   \end{aligned}
 \end{equation}
 As stated in Proposition \ref{prop:comparison} (see proof below), \eqref{eq:like2} is a lower bound for  \eqref{eq:like1}. This means that either
 \begin{itemize}
  \item assuming \eqref{eq:like1} when \eqref{eq:like2} is the true error-model, or
  \item assuming \eqref{eq:like2} when \eqref{eq:like1} is the true error-model
 \end{itemize}
 may lead to a biased posterior. We will further investigate the difference between these two models in future work.

 .
 \begin{proposition}
 \label{prop:comparison}
  The  likelihood \eqref{eq:likelihood1} is a lower bound of the the batch-preference likelihood:
    \begin{equation}
  \label{eq:likelihoodbatch00}
 \begin{aligned}
       \int &\left(\prod_{{\bf v} \in R(A_k)}\Phi\left(\frac{u({\bf o})+w_k-u({\bf v})}{\sigma}\right)\right)N(w_k;0,\sigma^2)dw_k.\\
   \end{aligned}
 \end{equation}
 \end{proposition}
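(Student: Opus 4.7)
My plan is to prove the claim via a probabilistic reformulation of both sides combined with Jensen's inequality exploiting the log-concavity of the standard normal CDF.

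First, I would rewrite the two quantities using latent Gaussian representations. Setting $c_v := u({\bf o})-u({\bf v})$ and standardising via $s = w_k/\sigma$, the batch likelihood becomes $\int \prod_{v\in R(A_k)} \Phi((c_v+\sigma s)/\sigma)\,\phi(s)\,ds = E_W\bigl[g(W)\bigr]$ with $g(w) = \prod_v \Phi((c_v+w)/\sigma)$ and $W\sim N(0,\sigma^2)$. Writing $\Phi((c_v+w)/\sigma) = P(\sigma Z_v \le c_v + w \mid w)$ for iid $Z_v \sim N(0,1)$ independent of $W$, marginalisation gives
\begin{equation*}
E_W[g(W)] \;=\; P\!\left(\sigma Z_v - W \le c_v \;\forall v\in R(A_k)\right),
\end{equation*}
i.e.\ a joint probability under a centred Gaussian vector with variance $2\sigma^2$ per component and covariance $\sigma^2$ between distinct components (correlation $1/2$ induced by the shared $W$). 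Symmetrically, the product likelihood \eqref{eq:like2} equals $P(\sigma Z_v' \le c_v \;\forall v)$ for independent $N(0,\sigma^2)$ variables $Z_v'$.

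Second, I would invoke Jensen's inequality together with the log-concavity of $\Phi$ (a standard fact about the normal CDF). Applying Jensen to the concave $\log$ gives $\log E_W[g(W)] \ge E_W[\log g(W)] = \sum_v E_W[\log \Phi((c_v+W)/\sigma)]$. The idea is then to relate each term $E_W[\log \Phi((c_v+W)/\sigma)]$ to $\log \Phi(c_v/\sigma)$ using the curvature properties of $\log\Phi$, and to combine the terms in a way that produces the desired inequality on the exponentiated quantities. A convenient route is to argue via the Gaussian correlation inequality of Royen, which applies to the positively correlated vector $(\sigma Z_v - W)_v$ and gives $P(\sigma Z_v - W \le c_v \;\forall v) \ge \prod_v P(\sigma Z_v - W \le c_v)$; this lower bound can then be combined with monotonicity of $\Phi$ and the symmetry of $W$ about $0$ to relate the result back to $\prod_v \Phi(c_v/\sigma)$.

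The main obstacle is reconciling the two competing effects in the Gaussian representation: the batch vector $(\sigma Z_v - W)_v$ has \emph{larger} marginal variance ($2\sigma^2$) than $(\sigma Z_v')_v$ ($\sigma^2$), which tends to push the joint probability in the opposite direction from the positive correlation induced by the shared $W$. Because the two marginals differ, Slepian's lemma cannot be applied directly; one must either rescale and track how the effective thresholds shift from $c_v/\sigma$ to $c_v/(\sigma\sqrt 2)$, or invoke a Gaussian correlation/FKG-type inequality together with a careful monotonicity argument. A naive Jensen chain on $\log E_W[g(W)]$ bounds it from below by a quantity that is itself bounded from above by $\log$ of the LHS, so the nontrivial step is to avoid this collapse, likely by working in the probability (rather than log-probability) domain and exploiting the product structure over $v\in R(A_k)$ simultaneously with the shared noise $w_k$.
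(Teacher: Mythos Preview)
Your proposal does not close, and the machinery you reach for (Jensen on $\log$, log-concavity of $\Phi$, Royen's Gaussian correlation inequality, Slepian) is much heavier than what the paper invokes. The paper's argument is a one-line application of the covariance inequality for monotone functions of a single random variable: each factor $w\mapsto\Phi\bigl((u(\mathbf o)+w-u(\mathbf v))/\sigma\bigr)$ is nondecreasing in $w_k$, so iterating $E[fg]\ge E[f]\,E[g]$ for increasing $f,g$ gives
\[
E_{w_k}\!\Bigl[\prod_{\mathbf v\in R(A_k)}\Phi\bigl((c_v+w_k)/\sigma\bigr)\Bigr]\;\ge\;\prod_{\mathbf v\in R(A_k)}E_{w_k}\!\bigl[\Phi\bigl((c_v+w_k)/\sigma\bigr)\bigr].
\]
Your Royen/Gaussian-correlation step is this same inequality rephrased through the shared-$W$ dependence of the vector $(\sigma Z_v-W)_v$, so that part of your plan is essentially the paper's proof, just stated in a more elaborate language.

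Where your plan stalls, however, is a genuine issue and not merely a missing trick. Each marginal expectation above equals $\Phi\bigl(c_v/(\sqrt{2}\,\sigma)\bigr)$ (via $E_Z[\Phi(a+Z)]=\Phi(a/\sqrt 2)$ for $Z\sim N(0,1)$), not $\Phi(c_v/\sigma)$; this is exactly the ``larger marginal variance $2\sigma^2$ versus $\sigma^2$'' obstacle you flagged. The paper's final displayed line passes from the product of expectations to $\prod_v\Phi(c_v/\sigma)$ without accounting for this $\sqrt 2$, and in fact the inequality as stated already fails for a single rejected object with $c_v=u(\mathbf o)-u(\mathbf v)>0$: the batch side is then $\Phi\bigl(c_v/(\sqrt 2\,\sigma)\bigr)<\Phi(c_v/\sigma)$. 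So the ``nontrivial step'' you could not complete cannot be completed; what the FKG-type argument legitimately delivers is the bound with $\sqrt 2\,\sigma$, which matches the noise-model scaling in \eqref{eq:likelcdf0aa} rather than the $\sigma$-version in \eqref{eq:likelihood1}.
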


\begin{proof}
We are going to use the following results. \\
  
\textbf{Result:}  If ${\bf v}=[v_1,\dots,v_d]$ are independent, then for any increasing functions $h$ and $g$ of $n$ variables:
 $$
 E[f({\bf v})g({\bf v})]\geq E[f({\bf v})]E[g({\bf v})].
 $$
 The proof can be found in \cite[Sec.9.9]{ROSS2013153}     
   
  Consider the  likelihood \eqref{eq:likelihoodbatch00} 
$$
 \begin{aligned}
     \int &\left(\prod_{{\bf v} \in R(A_k)}\Phi\left(\frac{u({\bf o})+w_k-u({\bf v})}{\sigma}\right)\right)N(w_k;0,\sigma^2)dw_k.\\
   \end{aligned}
$$    
and note that all the functions between $(\dots)$ are monotone increasing in $w_k$. Therefore, we can exploit the above result to derive that
$$
 \begin{aligned}
      & \int \left(\prod_{{\bf v} \in R(A_k)}\Phi\left(\frac{u({\bf o})+w_k-u({\bf v})}{\sigma}\right)\right)N(w_k;0,\sigma^2)dw_k\geq  \prod_{{\bf v} \in R(A_k)}\Phi\left(\frac{u({\bf o})-u({\bf v})}{\sigma}\right). 
   \end{aligned}
$$   
 
\end{proof}

\section{ChoiceNN vs.\ ChoiceGP}
\label{sec:counterexample}

We illustrate the issue with ChoiceNN considering the 1D utility function $u(x)=\cos(5x)+\exp\left(-\tfrac{x^2}{8}\right)$ with $x \in [-2.6,2.6]$
in Figure \ref{fig:trueuNN}.
    
\begin{figure}[h]
\centering
 \begin{tabular}{c}
\includegraphics[height=3cm]{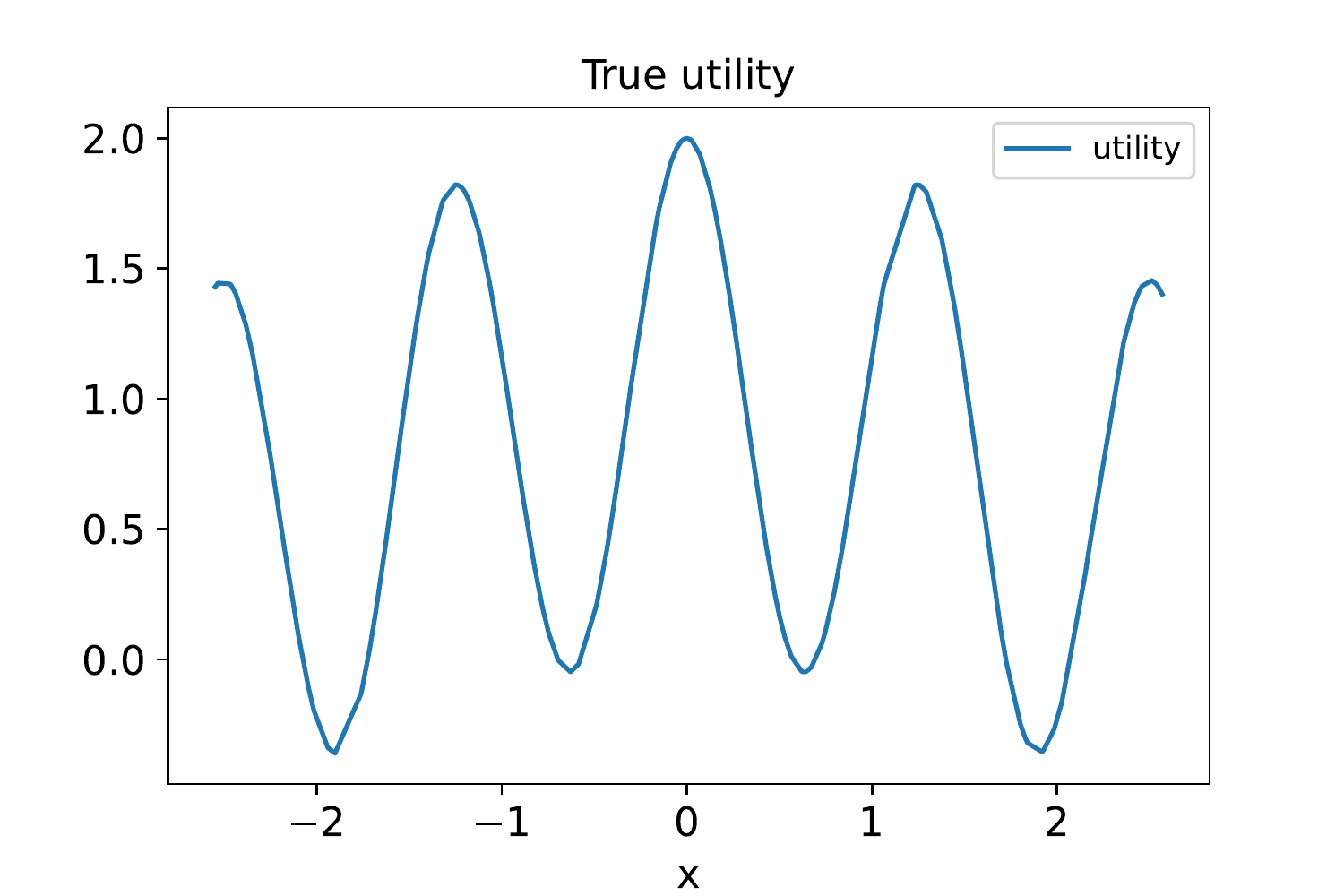}
                                                                                                                                       \end{tabular}
	\caption{True utility function}
	\label{fig:trueuNN}
\end{figure}

We used $u$ to generate choice data.  We sampled $150$ inputs $x_i$ at random in $ [-2.6,2.6]$. We then generated  $m=500$ random subsets $\{A_k\}_{k=1}^m$ of the 500 points each one of size $|A_k|=2$  and computed the corresponding choice pairs $(C(A_k),A_k)$ based on  $u$.

We ran ChoiceNN with fixed  latent dimension $d=1$
The learned utility function is shown in Figure \ref{fig:appdiff} (a), which is reasonably consistent with the true utility.

We then ran ChoiceNN with fixed  latent dimension $d=2$ (the true latent dimension is one) and reported the estimated utility functions, for two different random initialisation of the parameters of the NN, in Figure \ref{fig:appdiff} (b) and, respectively (c). It can be noticed that the model  converged to two different local optima. In both cases, the learned utility functions are not Pareto-consistent with the choice-data. In other words, the model is not able to find a utility representation of the choice data and, therefore, it is not able to make correct predictions.
We have tried different NN architectures (number of layers and number of nodes) as well as different values of the hyper-parameters for ChoiceNN, but the issue remains.

 The disadvantage of a nonlinear parametric method, like ChoiceNN, is the fact that the latent utility functions depend nonlinearly on the parameters. Instead, in ChoiceGP, the utility functions (at the training data) are part of the  the variational parameters and, therefore, can be more easily optimised to  satisfy the Pareto-consistency implied by the choice data.

 Figure \ref{fig:trueChoiceGP} shows the utilities learned by ChoiceGP with $d=2$. They coincide.
 This shows that ChoiceGP is able to easily understand that the true latent dimension is one.
 Moreover, the learned utility basically coincides with the true utility in Figure \ref{fig:trueuNN}
 (apart from a scaling factor, which cannot be estimated from the data).

\begin{figure*}[h]
	\centering
	\begin{tabular}{ccc}
		\includegraphics[height=3cm,trim={0.0cm 0.0cm 1.5cm 0.0cm }, clip]{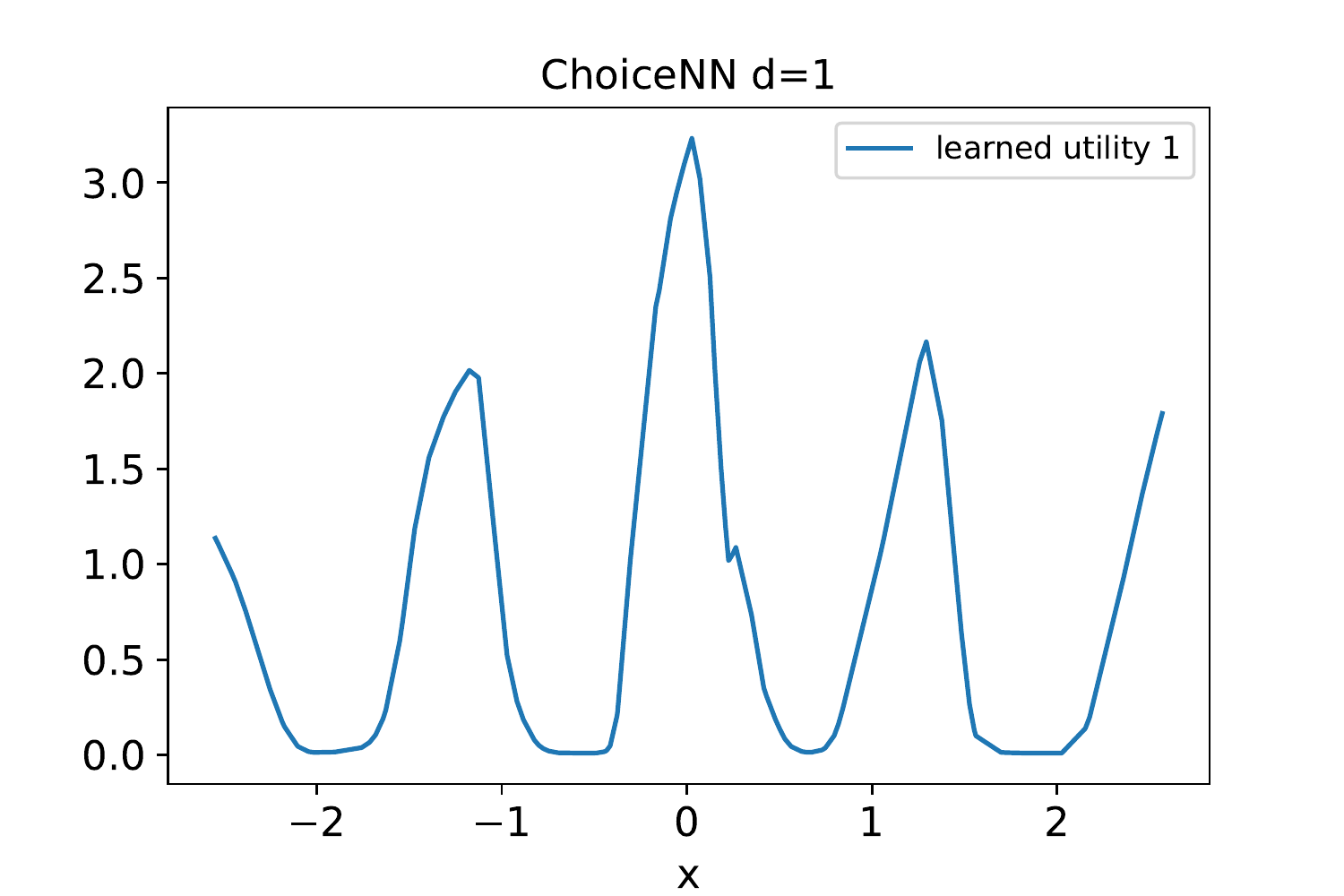} &
		\includegraphics[height=3.0cm,trim={0.0cm 0.0cm 1.5cm 0.0cm }, clip]{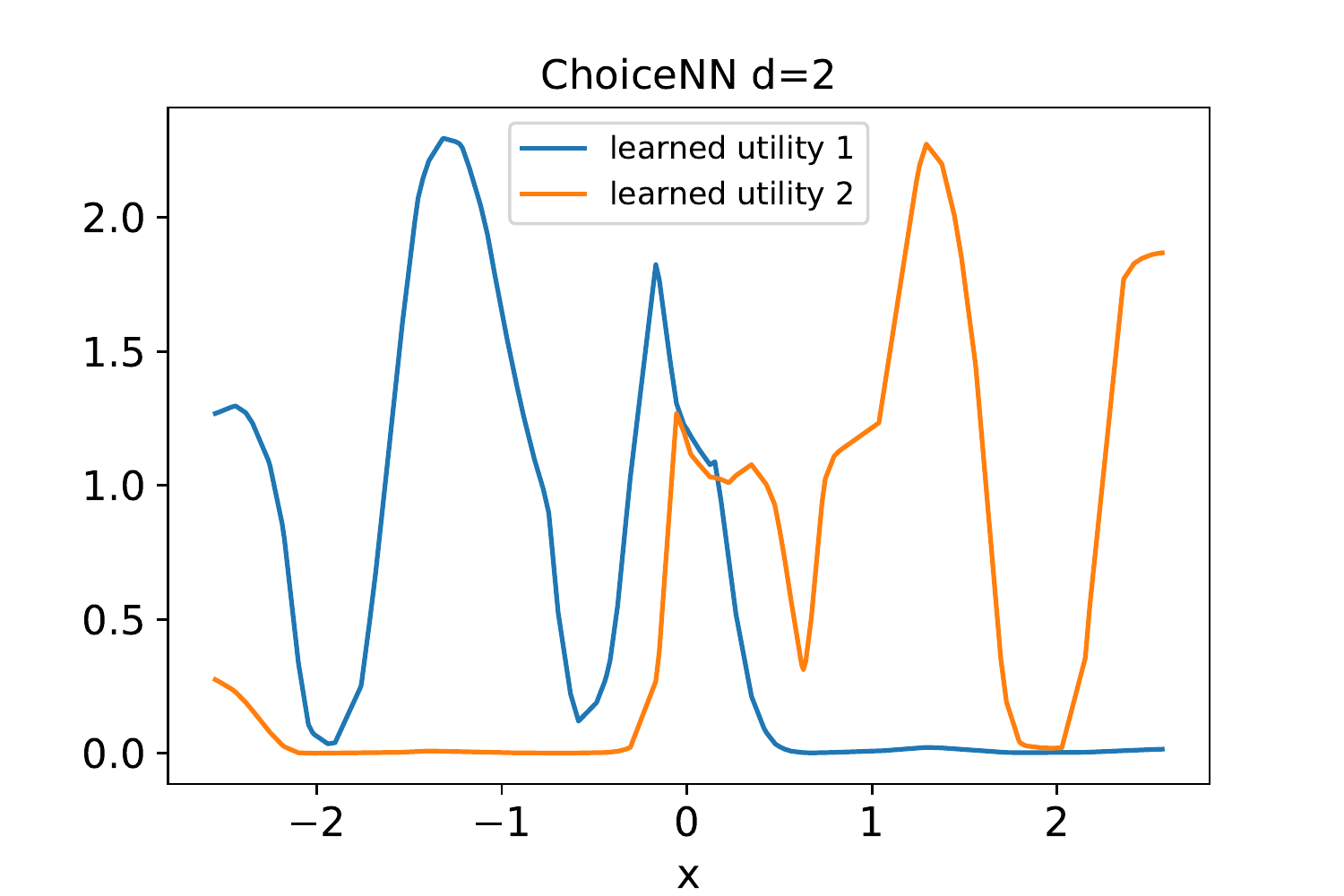} &
		\includegraphics[height=3.0cm,trim={0.0cm 0.0cm 1.5cm 0.0cm }, clip]{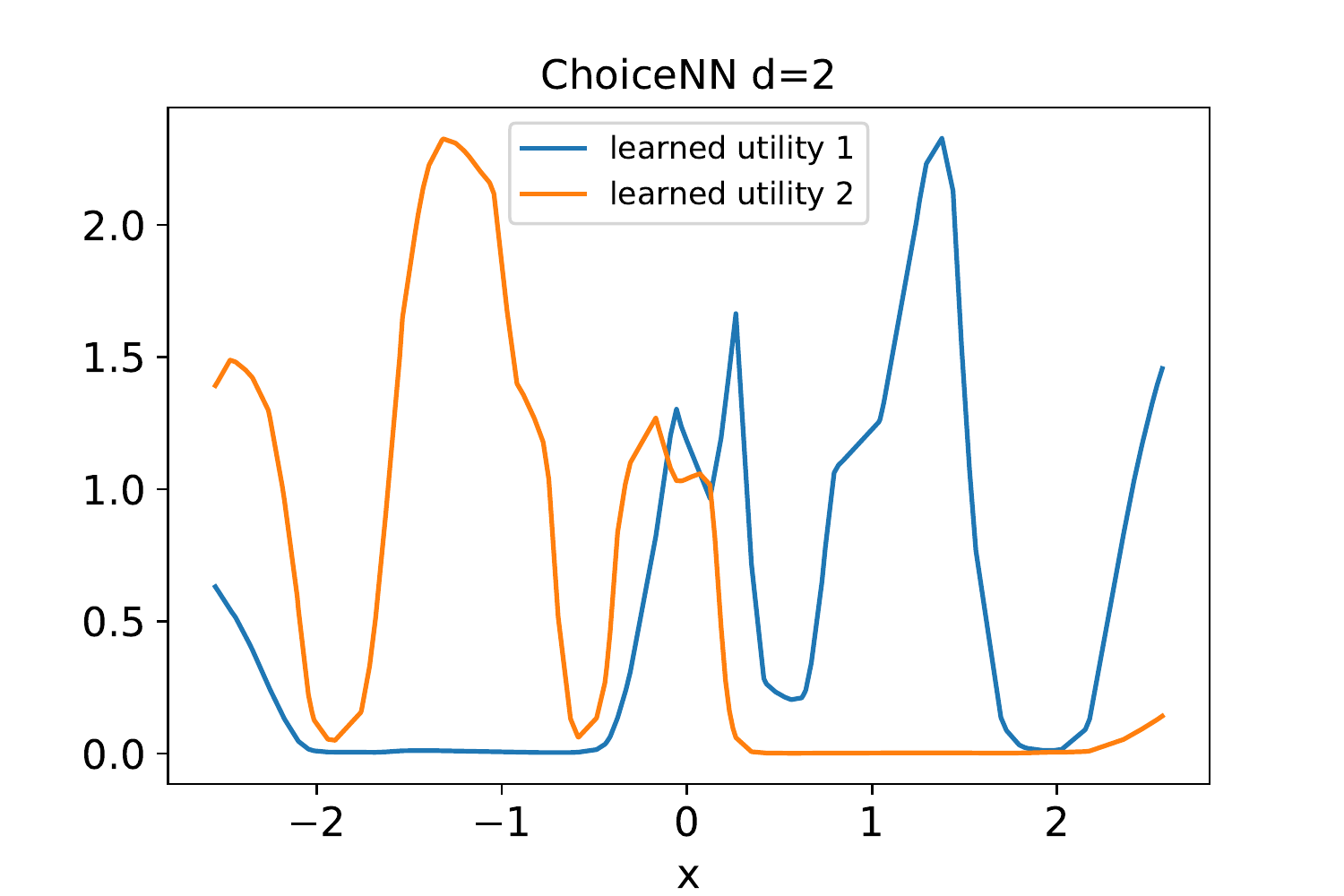} \\
		(a) & (b) & (c)\\
	\end{tabular}  
	\caption{Learned utilities via ChoiceNN}
	\label{fig:appdiff}
\end{figure*}

\begin{figure}[h]
\centering
 \begin{tabular}{c}
\includegraphics[height=3cm]{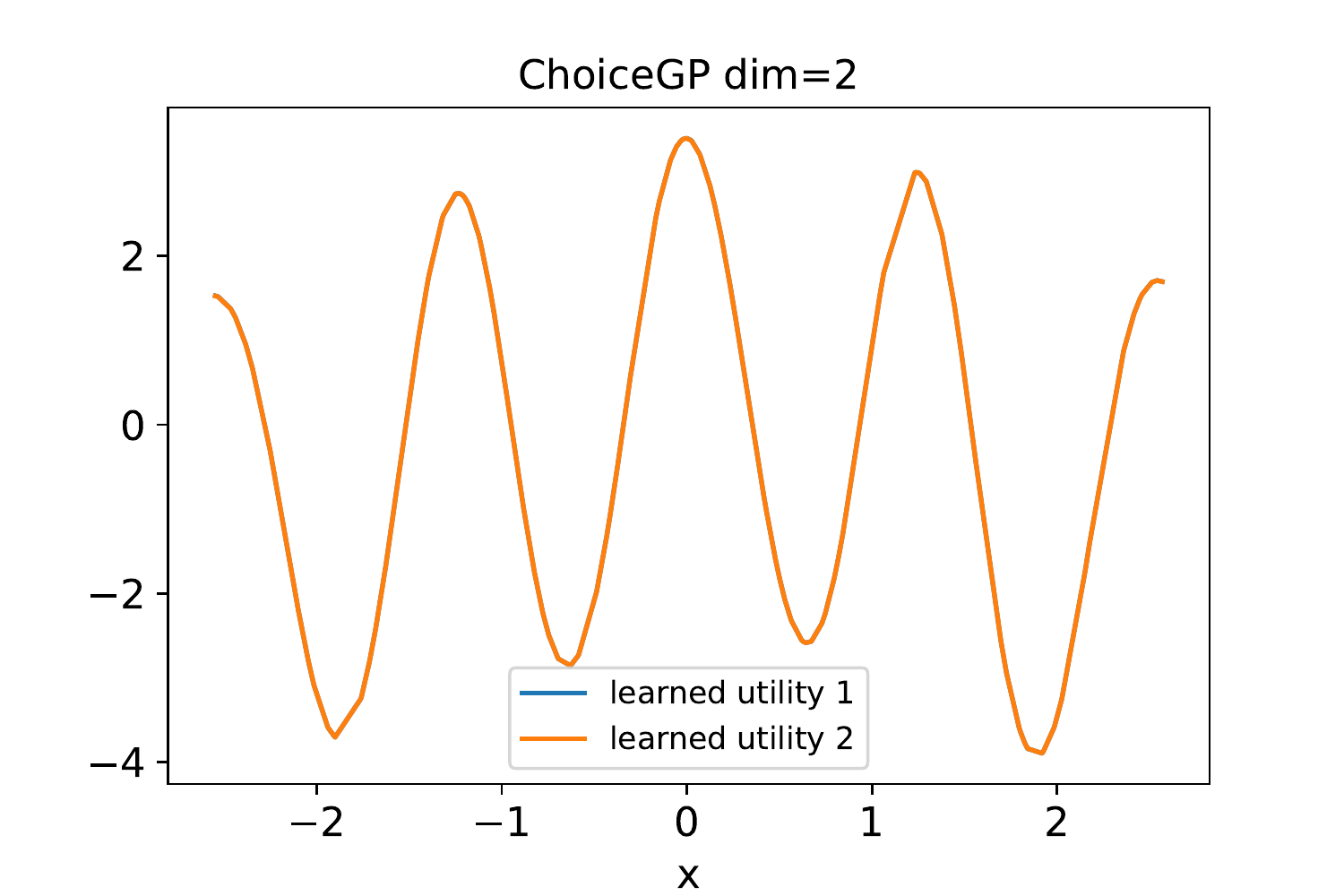}
                                                                                                                                       \end{tabular}
	\caption{Learned utilities via ChoiceGP}
	\label{fig:trueChoiceGP}
\end{figure}

\clearpage 

 \section{Real-datasets}
 \label{app:real}
  Table \ref{tab:charac} displays the characteristics of the considered datasets. 
  
\begin{table}[H]
		\begin{center}
			{\small
			   \scalebox{0.8}{
				\begin{tabular}{lccc}
					\hline
					{\bf Dataset}  & {\bf \#Features} & {\bf \#Outputs} \\
					\hline
					AM &  6 & 3 \\
					EDM &  4 & 3 \\
					jura &  6 & 3 \\
                    slump  & 7 & 3 \\
                    vehicle  & 5 & 3 \\
					\hline
				\end{tabular}}
			}
		\end{center}   
		\caption{Characteristics of the datasets.}
		\label{tab:charac}
	\end{table}
	
The first 4 datasets are standard datasets used in multi-target regression. The ``vehicle dataset'' has been obtained from the  Vehicle-Safety model\footnote{A model that determines the thickness of five reinforced components of a vehicle's frontal frame \citep{yang2005metamodeling}} using a latin-hypercube design of experiment.  We have included the datasets in our repository together with the code to replicate the experiments. 

We have implemented GPGP, PGP and PairGP in GPy \citep{gpy2014}. For ChoiceNN, we use the implementation provided by the authors  \cite{pfannschmidt2020learning}.  

As shown in the average-accuracy table in Section \ref{sec:real}, ChoiceGP has a higher average accuracy than the other models. This claim is also supported by a statistical analysis as we will show hereafter. 

We have compared ChoiceGP against PGP, GPGP and PairGP for the majority-rule using the pairwise Bayesian hierarchical hypothesis testing model \citep{corani2017statistical}. The test accounts for the correlation between the paired differences of accuracy due to  the overlapping training
sets built during cross-validation. This test  declares two models practically equivalent when the difference of accuracy is less than 0.01 (1\%). The interval $[-0.01, 0.01]$ thus
defines a region of practical equivalence (rope) for the performance of the models. For instance for the pair (ChoiceGP,PGP), the test returns the posterior samples of the  probability vector  $[p(ChoiceGP > PGP), p(ChoiceGP \approx PGP), p(ChoiceGP < PGP)]$ and,
therefore, this posterior can be visualised in the probability simplex (Figure \ref{fig:hierarch}). 
For all the pairwise comparisons,  it can seen that the vast majority of the samples are in the region at the right bottom of the
triangle. This confirms that ChoiceGP is practically significantly better than the other three methods.
 Note that, we have only statistically compared the methods in the majority-rule scenario, because the differences are even larger in the random scenario.

\begin{figure}[h]
\centering  
 \includegraphics[width=5cm]{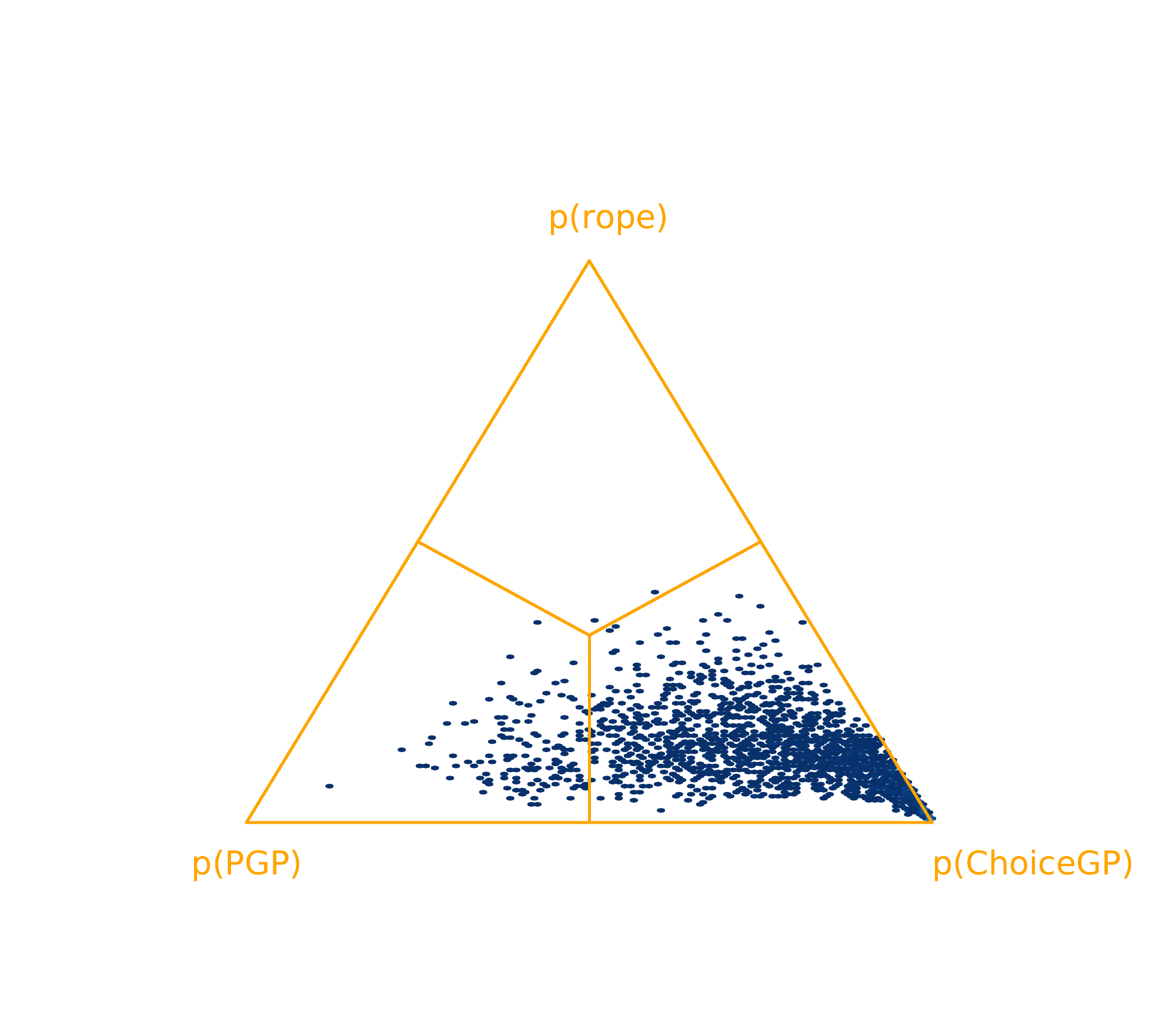}
  \includegraphics[width=5cm]{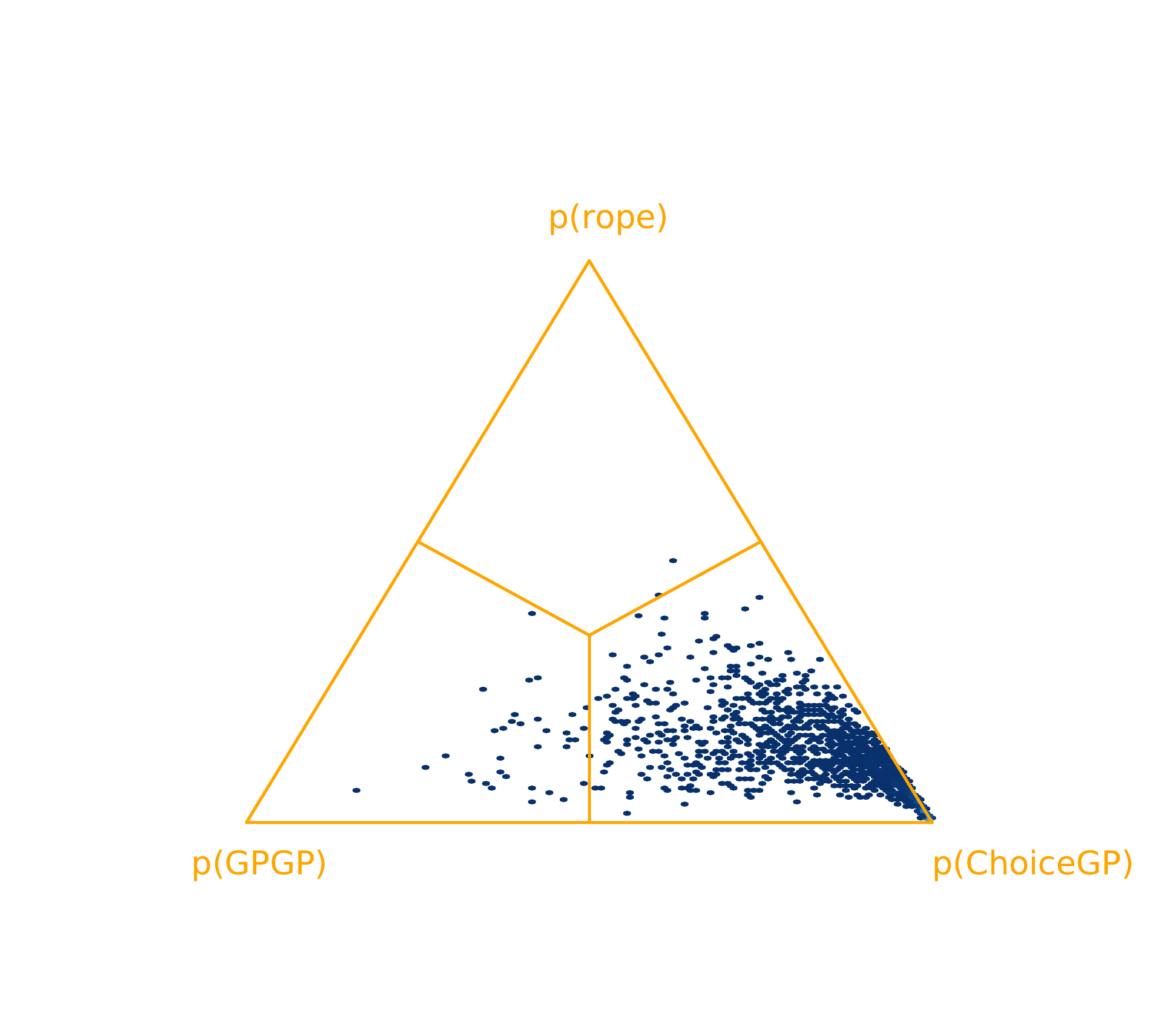}
   \includegraphics[width=5cm]{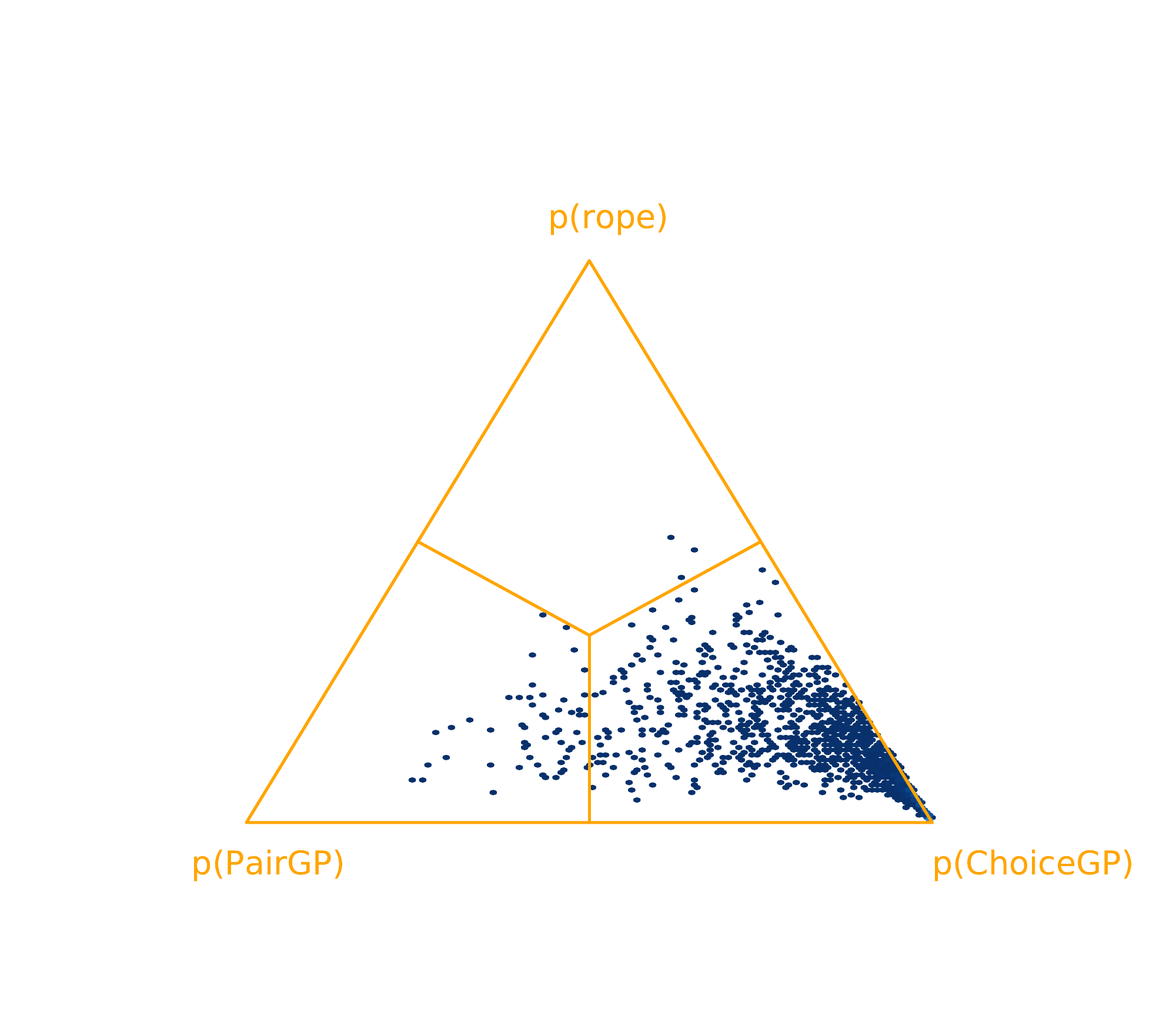}
   \caption{Posterior samples for the pairwise tests ChoiceGP vs. PGP, GPGP and, respectively, PairGP. This confirms that ChoiceGP is practically significantly better than the other three methods.}
   \label{fig:hierarch}
\end{figure}
\end{document}